\documentclass[preprint,12p,number]{elsarticle}
\usepackage[dvipsnames, table]{xcolor}
\usepackage[breaklinks=true,bookmarks=true]{hyperref}

\usepackage{bigints}
\usepackage{graphicx}%
\usepackage{multirow}%
\usepackage{amsmath,amssymb,amsfonts}%
\usepackage{amsthm}%
\usepackage{mathrsfs}%
\usepackage[title]{appendix}%
\usepackage{xcolor}%
\usepackage{textcomp}%
\usepackage{manyfoot}%
\usepackage{booktabs}%
\usepackage{algorithm}%
\usepackage{algorithmicx}%
\usepackage{algpseudocode}%
\usepackage{listings}%
\usepackage[table]{xcolor}
\usepackage{hhline}
\usepackage{array}
\usepackage{subcaption}
\usepackage{dashrule}
\usepackage{tikz}
\usepackage{pgfplots}
\usepackage{array}
\usepackage{subcaption}
\usepackage{dashrule}

\newcolumntype{C}[1]{>{\centering\arraybackslash}m{#1}}

\theoremstyle{thmstyleone}%
\theoremstyle{thmstyletwo}%
\newtheorem{remark}{Remark}%

\theoremstyle{thmstylethree}%
\newtheorem{lemma}{Lemma}%

\DeclareMathOperator*{\argmin}{arg\,min}

\usepackage{pifont}

\usepackage{nicematrix}
\definecolor{lgreen}{RGB}{236, 255, 201}
\definecolor{lightblue}{rgb}{0.678, 0.847, 0.902}

\begin{document}

\begin{frontmatter}

\title{Robust Basis Spline Decoupling for the Compression of Transformer Models}

\author[1]{Joppe De Jonghe\corref{cor1}\fnref{label2}}
\ead{joppe.dejonghe@kuleuven.be}

\author[2]{Van Tien Pham\fnref{label2}}

\author[1]{Mariya Ishteva}

\affiliation[1]{
            organization={NUMA, Department of Computer Science, KU Leuven},
            addressline={Kleinhoefstraat 4},
            city={Geel},
            postcode={2440},
            state={Antwerpen},
            country={België}
            }

\affiliation[2]{
            organization={Université Paris-Saclay, CEA, List},
            city={Palaiseau},
            postcode={F-91120},
            country={France}
            }

\fntext[label2]{These authors contributed equally to this work as co-first authors}
\cortext[cor1]{Corresponding author}

\begin{abstract}
Decoupling is a powerful modeling paradigm for representing multivariate functions as compositions of linear transformations and univariate nonlinear functions. A single-layer decoupling can be viewed as a fully connected neural network with a single hidden layer and flexible activation functions, providing a direct link with neural networks. Because of this, the use of decoupling methods has gained increasing attention in neural network domains, particularly compression, since it enables structured approximations with reduced parameter complexity. Existing tensor‑based decoupling methods typically rely on polynomial or piecewise‑linear parameterizations of the internal nonlinear functions, which can suffer from numerical instability or limited expressiveness.

In this work, we introduce a B-spline-based decoupling framework that generalizes these existing approaches. By exploiting the local support and flexible smoothness control of B-splines, the proposed formulation yields a more numerically stable and expressive representation. We derive a constrained coupled matrix–tensor factorization and propose a robust alternating least‑squares algorithm, called R-CMTF-BSD, incorporating normalization and Tikhonov regularization. The proposed method is validated through experiments on synthetic data and transformer model compression. Results on the Vision and Swin Transformer architectures demonstrate that B‑spline decoupling enables substantial parameter reduction while maintaining competitive accuracy, making the R-CMTF-BSD algorithm a promising tool for structured neural network compression.
\end{abstract}

\begin{keyword}
Tensor Decomposition \sep Decoupling \sep B-spline \sep Transformer Compression

\end{keyword}

\end{frontmatter}

\section{Introduction}\label{sec1}
Neural networks are widely used to model high‑dimensional nonlinear functions across domains such as vision, language, and system identification. Despite their expressive power, modern architectures often rely on heavily overparameterized components, limiting interpretability and increasing computational and memory costs \cite{pham2024efficient}. These concerns motivate research into structured representations that retain expressive power while enabling model compression, analysis, and efficient deployment.

A promising approach in this direction is the decoupling paradigm introduced by Dreesen et al. \cite{dreesen2015decoupling}, which represents a multivariate vector‑valued function as a linear mapping followed by a layer of univariate nonlinear functions and a final linear transformation (see Figure \ref{fig:decoupling}). The authors show that under mild conditions, such representations are identifiable from first‑order information by evaluating the Jacobian at multiple inputs and exploiting the resulting tensor structure through the canonical polyadic decomposition (CPD). However, the approach in \cite{dreesen2015decoupling} relies on the uniqueness of the tensor decomposition, which doesn't hold in noisy or approximate settings. To this end, Hollander \cite{hollander2017multivariate} proposed a polynomial constraint algorithm. Zniyed et al. \cite{zniyed2021tensor} generalized this idea by formulating a general basis function representation of the internal functions and incorporating zeroth‑order information through coupled matrix–tensor factorizations. Other extensions and adaptations of decoupling algorithms exist; some examples include~\cite{decuyper2022decoupling, de2026tensor, dreesen2018decoupling}.

The decoupled representation represents a fully connected neural network with a single hidden layer and flexible activation functions. This connection between decoupling and neural networks has been exploited in \cite{westwick2021decoupling}, where they applied it to a neural network NARX model and \cite{zniyed2021tensor}, which focused explicitly on the compression of a convolutional neural network.

However, existing decoupling methods typically rely on restrictive bases for the internal functions. Polynomial decoupling \cite{dreesen2015decoupling, hollander2017multivariate} employs monomial bases, which suffer from numerical ill‑conditioning and poor extrapolation behaviour for higher degrees. The approach in \cite{zniyed2021tensor} replaces the polynomial bases with piecewise‑linear representations constructed from shifted ReLU (rectified linear unit) functions, improving numerical behaviour while remaining closely tied to neural network architectures. Nevertheless, this imposes structural constraints on the function space, limiting expressiveness and smoothness.

These limitations motivate the need for a more general and numerically stable basis for decoupling. In this work, we propose a B‑spline‑based parameterization of the internal functions in decoupled models. B‑splines construct locally supported bases with controllable smoothness and approximation power determined by the spline degree and knot placement. Importantly, polynomial and piecewise‑linear bases arise as special cases of the B‑spline formulation under appropriate configurations, establishing B‑splines as a strict generalization of existing approaches. This unification allows decoupled models to easily move between polynomial representations, piecewise-linear activations, and more expressive spline‑based models within a single framework.
\begin{figure*}[!htbp]
\begin{center}
\begin{tikzpicture}
\node (x1) at (0,3) {$x_1$};
\node at (0,2.15) {$\vdots$};
\node (xm) at (0,1) {$x_m$};
\draw [thick,fill=red!50, rounded corners=5pt] (0.75,0.5) rectangle (2.75,3.5); \node (F) at (1.75,2) {$\mathbf{f}(\mathbf{x})$};
\draw [->, thick, label=] (2.75,3) -- (3.25,3) node[right] {$y_1$};
\node at (3.4,2.15) {$\vdots$};
\draw [->, thick] (2.75,1) -- (3.25,1) node[right] {$y_n$};
\draw [->, thick] (x1) -- (0.75,3);
\draw [->, thick] (xm) -- (0.75,1);
\end{tikzpicture}
\raisebox{11\height}{$\Longleftrightarrow$}
\begin{tikzpicture}
\node (x1) at (0,3) {$x_1$};
\node at (0,2.15) {$\vdots$};
\node (xm) at (0,1) {$x_m$};
\draw [thick, fill=green!20] (0.75,0.5) rectangle (1.5,3.5); 
\node (L) at (1.15,2) {$\mathbf{W}_0$};
\draw [->, thick] (x1) -- (0.75,3);
\draw [->, thick] (xm) -- (0.75,1);
\node [shape=rectangle,draw,thick,fill=blue!25,rounded corners=5pt] (g1) at (3,3) {$g_1(v_1)$};
\draw [->, thick, label=] (1.5,3) -- (g1) node[above,midway] {$v_1$};
\node at (3,2.15) {$\vdots$};
\node [shape=rectangle,draw,thick,fill=blue!25,rounded corners=5pt] (gr) at (3,1) {$g_r(v_r)$};
\draw [->, thick] (1.5,1) -- (gr) node[above,midway] {$v_r$};
\draw [thick, fill=green!20] (4.5,0.5) rectangle (5.25,3.5); 
\node (R) at (4.9,2) {$\mathbf{W}_1$};
\draw [->, thick] (g1) -- (4.5,3) node[above,midway] {$z_1$};
\draw [->, thick] (gr) -- (4.5,1) node[above,midway] {$z_r$};
\node (y1) at (6,3) {$y_1$};
\node at (5.9,2.15) {$\vdots$};
\node (yn) at (6,1) {$y_n$};
\draw [->, thick] (5.25,3) -- (y1);
\draw [->, thick] (5.25,1) -- (yn);
\end{tikzpicture}
\end{center}
\caption{The single-layer decoupling problem. Given a multivariate vector function $\mathbf{f}: \mathbb{R}^{m} \rightarrow \mathbb{R}^{n}$, determine factor matrices $\mathbf{W}_0 \in \mathbb{R}^{r \times m}$, $\mathbf{W}_1 \in \mathbb{R}^{n \times r}$ and the internal functions $\mathbf{g} = \begin{bmatrix}g_{1} & g_{2} & \cdots & g_r\end{bmatrix}^\top$, $g_i: \mathbb{R} \rightarrow \mathbb{R}$, so that
    $\mathbf{f}(\mathbf{x}) = \mathbf{W}_1
    \mathbf{g}(\mathbf{W}_0 \mathbf{x})$.}
    \label{fig:decoupling}
\end{figure*}
In this work, we develop a B-spline constrained CMTF formulation for single-layer decoupling. To address numerical instabilities observed in practice, particularly in neural network compression settings, we introduce a robust alternating least‑squares algorithm that incorporates normalization and Tikhonov regularization. The resulting method, named R-CMTF-BSD, enforces the decoupled structure through projection steps while maintaining stability in practical settings.

The main contributions of this work are: 1) A unifying B‑spline decoupling framework that generalizes polynomial and piecewise‑linear decoupling methods, 2) a spline‑constrained CMTF formulation for single‑layer decoupling that incorporates both first- and zeroth-order information, 3) a robust alternating least‑squares algorithm (R‑CMTF‑BSD) with normalization and Tikhonov regularization to mitigate numerical instabilities, 4) an application to transformer compression, demonstrating effective parameter reduction with limited accuracy loss and 5) an evaluation of compression strategies referred to as \textit{back-to-front} and \textit{front-to-back}, highlighting the advantages of \textit{back‑to‑front} decoupling for full network compression.

The remainder of this paper is organized as follows. Section \ref{sec:background} introduces notation and background on tensor decompositions and B‑spline bases. Section \ref{sec:single-layer-decoupling} reviews single‑layer decoupling and its tensor‑based formulation. Section \ref{sec:B-spline_parameterization} presents the B‑spline parameterization and establishes its relation to polynomial and piecewise‑linear decoupling. Section \ref{sec:opt_algorithm} introduces the constrained optimization problem and the proposed robust algorithm. Section \ref{sec:transformer_compression} describes transformer models and the application of decoupling for compression, together with possible compression strategies. Section \ref{sec:experiments} reports experimental results on synthetic data and transformer architectures. Finally, Section \ref{sec:conclusion} concludes the paper and outlines future work.

\section{Notation and Background}\label{sec:background}

\subsection{Notation}

Vectors, matrices, and tensors are denoted by bold lowercase, bold capital, and calligraphic letters respectively, e.g., the vector $\mathbf{a}$, matrix $\mathbf{A}$, and tensor $\mathcal{X}$. For a matrix $\mathbf{A}$, the $i$-th row and $j$-th column are denoted as $\mathbf{A}^{i,:}$ and $\mathbf{A}^{:,j}$ respectively. For a third order tensor $\mathcal{X}$ of size $I \times J \times K$, the $i$-th horizontal, $j$-th lateral and $k$-th frontal slice are denoted by $\mathcal{X}_{i,:,:}$, $\mathcal{X}_{:,j,:}$ and $\mathcal{X}_{:,:,k}$ respectively. The operation $\operatorname{unfold}_k(\mathcal{X})$ unfolds the tensor $\mathcal{X}$ over its $k$-th mode as described in \cite{kolda2009tensor}. The symbol $\odot$ denotes the Khatri-Rao product. Finally, the $\operatorname{diag}(.)$ operation forms a diagonal matrix where the main diagonal is the vector that is provided as a parameter and $\lVert . \rVert$ denotes the norm of a tensor, defined as the square root of the sum of the squares of its elements, so for a tensor $\mathcal{X} \in \mathbb{R}^{I \times J \times K}$
\begin{equation}
    \lVert \mathcal{X} \rVert = \sqrt{\sum^I_{i=1}\sum^J_{j=1}\sum^K_{k=1}x^2_{ijk}}. \label{eq:CPD}
\end{equation}

\subsection{Canonical polyadic decomposition}

A third-order tensor $\mathcal{X} \in \mathbb{R}^{I \times J \times K}$ admits a \textit{canonical polyadic decomposition} (CPD) \cite{kolda2009tensor} if it can be written as a sum of rank one tensors
\begin{align}
    \mathcal{X} = \sum^r_{i=1} \mathbf{a}_i \circ \mathbf{b}_i \circ \mathbf{c}_i = [\![\mathbf{A}, \mathbf{B}, \mathbf{C}]\!], \nonumber
\end{align}
where the vectors $\mathbf{a}_i \in \mathbb{R}^{I}$, $\mathbf{b}_i \in \mathbb{R}^{J}$ and $\mathbf{c}_i \in \mathbb{R}^{K}$, for $i=1,\hdots,r$, are collected as columns in the factor matrices $\mathbf{A} \in \mathbb{R}^{I \times r}$, $\mathbf{B} \in \mathbb{R}^{J \times r}$ and $\mathbf{C}\in\mathbb{R}^{K \times r}$. The canonical rank $r$ is the lowest value for which equation \eqref{eq:CPD} holds. Alternatively, it can be stated that the tensor $\mathcal{X}$ admits a CPD if its frontal slices can be represented as 
\begin{equation}
    \mathcal{X}_{:,:,k} = \mathbf{A} \cdot \text{diag}(\mathbf{C}^{k,:}) \cdot \mathbf{B}, \text{ for } k=1,2,\hdots,K. \label{eq:CPD_diag}
\end{equation} 

\noindent The CPD described in equation \eqref{eq:CPD} is unique under mild conditions. Uniqueness means that the CPD is unique up to the following scaling and permutation ambiguities \cite{kolda2009tensor}:
\begin{equation}
    \mathcal{X} = [\![\mathbf{A}\mathbf{\Pi}\mathbf{\Lambda_{A}}, \mathbf{B}\mathbf{\Pi}\mathbf{\Lambda_{B}}, \mathbf{C}\mathbf{\Pi}\mathbf{\Lambda_{C}}]\!], \nonumber
\end{equation}
with permutation matrix $\mathbf{\Pi} \in \mathbb{R}^{r \times r}$ and diagonal matrices $\mathbf{\Lambda_{A}}$, $\mathbf{\Lambda_{B}}$, $\mathbf{\Lambda_{C}}$ for which $\mathbf{\Lambda_{A}}\mathbf{\Lambda_{B}}\mathbf{\Lambda_{C}}=\mathbf{I}$.
Several sufficient uniqueness conditions exist (for example, Kruskal's condition), see  \cite{sidiropoulos2017tensor} for an overview.

\subsection{B-spline basis functions and clamped B-splines}
In this work, we refer to spline functions as univariate functions consisting of piecewise polynomials, which connect at knots. The collection of all such spline functions of a certain degree $d$ makes up the degree-$d$ spline function space, denoted by $\mathbb{S}_d$. 

A basis spline or B-spline function $B^{\boldsymbol{\Delta}}_{i,d}(u): \mathbb{R} \rightarrow \mathbb{R}$ of degree $d$ is then defined recursively \cite{de1978practical},
\begin{align}
    B^{\boldsymbol{\Delta}}_{i,d}(u) = \dfrac{u-u_i}{u_{i+d} - u_i} B^{\boldsymbol{\Delta}}_{i,d-1}(u) + \dfrac{u_{i+d+1}-u}{u_{i+d+1} - u_{i+1}} B^{\boldsymbol{\Delta}}_{i+1,d-1}(u) \nonumber
\end{align}
with knot vector $\boldsymbol{\Delta} = \begin{bmatrix}
    u_1 & u_2 & \cdots & u_k
\end{bmatrix} \in \mathbb{R}^{k}$ and for which the recursion stops at B-splines of degree $0$ with value
\begin{align}
    B^{\boldsymbol{\Delta}}_{i,0}(u) = \begin{cases}
        1 & \text{if } u_i \leq u < u_{i+1} \\
        0 & \text{otherwise}
    \end{cases}. \nonumber
\end{align}
The functions $B^{\boldsymbol{\Delta}}_{i,d}(.)$ are called basis splines or B-splines since any spline $S_{\boldsymbol{\Delta}}(u) \in \mathbb{S}_d$, defined on a given knot vector $\boldsymbol{\Delta}$, can be represented by a linear combination of $d$th-degree B-splines,
\begin{align}
    S_{\boldsymbol{\Delta}}(u) = \sum^{\nu}_{i=1} c_i B^{\boldsymbol{\Delta}}_{i,d}(u). \nonumber
\end{align}
where $\nu$ is called the \textit{degrees of freedom} of the spline. Thus, the B-splines form a basis of the spline function space \cite{de1978practical}. This can be written more explicitly using the notation $\mathbb{S}_{d,\boldsymbol{\Delta}} \subset \mathbb{S}_{d}$ which restricts the spline function space to those defined on the knot vector $\boldsymbol{\Delta}$, then 
\begin{align}
    \text{span}(\{B^{\boldsymbol{\Delta}}_{1,d}, B^{\boldsymbol{\Delta}}_{2,d}, \cdots, B^{\boldsymbol{\Delta}}_{\nu,d}\}) := \text{span}(S_{d, \boldsymbol{\Delta}}) = \mathbb{S}_{d,\boldsymbol{\Delta}}.\nonumber
\end{align}
Note here that the degrees of freedom value $\nu$ is the same for all spline functions in $\mathbb{S}_{d,\boldsymbol{\Delta}}$. This is because of the following relation between the degree $d$, internal knots, and degrees of freedom $\nu$
\begin{align}
    \text{internal\_knots} = \nu - d - 1, \nonumber
\end{align}
and since for $\mathbb{S}_{d,\boldsymbol{\Delta}}$ the degree $d$ and knot vector $\boldsymbol{\Delta}$ are fixed (thus also the internal\_knots), this leads to a fixed $\nu =  \text{internal\_knots} + d + 1$.

This work focuses on the use of clamped B-splines. This means that for a B-spline function of degree $d$ with knot vector $\boldsymbol{\Delta} = \begin{bmatrix}
    u_1 & u_2 & \cdots & u_k
\end{bmatrix}$, the first and last $d+1$ knots are equal, i.e,
\begin{align}
    u_1=u_2=\cdots=u_{d+1}  \text{ and } u_{k-d+1} = u_{k-d} = \cdots = u_k \nonumber 
\end{align} 
For extrapolation, the boundary polynomial pieces are simply extended. 

\section{Single-layer decoupling and tensor-based solution}\label{sec:single-layer-decoupling}

\subsection{Single-layer decoupling} 

This work focuses on the single-layer decoupling problem, introduced by \cite{dreesen2015decoupling}. The problem is stated as follows: given a multivariate vector function 
\begin{align}
    \mathbf{f}: \mathbb{R}^{m} \rightarrow \mathbb{R}^n: \mathbf{x} \mapsto \begin{bmatrix}
        f_1(\mathbf{x}) & f_2(\mathbf{x}) & \cdots & \mathbf{f}_n(\mathbf{x})
    \end{bmatrix}^{\top}, \nonumber
\end{align}
find linear transformation matrices $\mathbf{W}_1 \in \mathbb{R}^{n \times r}$, $\mathbf{W}_0 \in \mathbb{R}^{r \times m}$ and univariate functions $g_1, g_2, \hdots,g_r$, such that 
\begin{align}
    \mathbf{f}(\mathbf{x}) = \mathbf{W}_1 \mathbf{g}(\mathbf{W}_0 \mathbf{x}),\label{eq:single_layer_decoupling}
\end{align}
where for $\mathbf{u} = \mathbf{W}_0 \mathbf{x} \in \mathbb{R}^{r}$, $\mathbf{g}(\mathbf{u}) = \begin{bmatrix}
        g_1(u_1) & g_2(u_2) & \cdots & g_r(u_r)
    \end{bmatrix}^{\top}. \nonumber
$
The representation in \eqref{eq:single_layer_decoupling} is referred to as a single-layer decoupled representation of $\mathbf{f}(\mathbf{x})$ (Figure \ref{fig:decoupling}).
\subsection{Tensor-based solution} 

To solve the single-layer decoupling problem, Dreesen et al. \cite{dreesen2015decoupling} use the first-order information of the multivariate vector function $\mathbf{f}(\mathbf{x})$. This first-order information is encapsulated in the Jacobian $\mathbf{J}_{\mathbf{f}}(\mathbf{x}) \in \mathbb{R}^{n \times m}$,
\begin{align}
    \mathbf{J}_{\mathbf{f}}(\mathbf{x}) = \begin{bmatrix}
        \dfrac{\partial \mathbf{f}_1(\mathbf{x})}{\partial x_1} & \dfrac{\partial \mathbf{f}_1(\mathbf{x})}{\partial x_2} & \cdots & \dfrac{\partial \mathbf{f}_1(\mathbf{x})}{\partial x_m} \\
        \dfrac{\partial \mathbf{f}_2(\mathbf{x})}{\partial x_1} & \dfrac{\partial \mathbf{f}_2(\mathbf{x})}{\partial x_2} & \cdots & \dfrac{\partial \mathbf{f}_2(\mathbf{x})}{\partial x_m} \\
        \vdots & \vdots &  & \vdots \\
        \dfrac{\partial \mathbf{f}_n(\mathbf{x})}{\partial x_1} & \dfrac{\partial \mathbf{f}_n(\mathbf{x})}{\partial x_2} & \cdots & \dfrac{\partial \mathbf{f}_n(\mathbf{x})}{\partial x_m}
    \end{bmatrix}, \nonumber
\end{align}
which under the assumption that $\mathbf{f}(\mathbf{x})$ admits the decoupled model \eqref{eq:single_layer_decoupling} and evaluated in a point $\mathbf{x}^{(s)}$, has the following structure
\begin{align}
    \mathbf{J}_{\mathbf{f}}(\mathbf{x}^{(s)}) &= \mathbf{W}_1 \; \text{diag}(\mathbf{g}'(\mathbf{W}_0 \mathbf{x}^{(s)})) \; \mathbf{W}_0 \nonumber \\
    &= \mathbf{W}_1 \; \mathbf{D}^{(s)}_{\mathbf{g}} \; \mathbf{W}_0 \label{eq:structure_Jac}
\end{align}
where $\mathbf{g}'(\mathbf{u}) = \begin{bmatrix}
        g'_1(u_1) & g'_2(u_2) & \cdots & g'_r(u_r)
    \end{bmatrix}$. 
Now, evaluating the Jacobian $\mathbf{J}_{\mathbf{f}}(\mathbf{x})$ in $S$ sampling points $\mathbf{x}^{(1)}$, $\mathbf{x}^{(2)}$,$\hdots$, $\mathbf{x}^{(S)}$ yields Jacobians $\mathbf{J}_{\mathbf{f}}(\mathbf{x}^{(1)})$, $\mathbf{J}_{\mathbf{f}}(\mathbf{x}^{(2)})$, $\hdots$, $\mathbf{J}_{\mathbf{f}}(\mathbf{x}^{(S)})$, which can be stacked as frontal slices a tensor $\mathcal{J} \in \mathbb{R}^{n \times m \times S}$. Under the assumption that $\mathbf{f}(\mathbf{x})$ admits the decoupled model \eqref{eq:single_layer_decoupling} and due to the resulting structure of the Jacobian \eqref{eq:structure_Jac}, the frontal slices of the tensor $\mathcal{J}$ correspond to
\begin{align}
   \mathcal{J}_{:,:,s} = \mathbf{W}_1 \; \mathbf{D}^{(s)}_{\mathbf{g}} \; \mathbf{W}_0, \text{ for } s = 1,\hdots, S, \nonumber
\end{align}
which is exactly the structure of the CPD as shown by \eqref{eq:CPD_diag}. As a result and by construction, the tensor $\mathcal{J}$ admits a CPD, i.e., $\mathcal{J}=[\![ \mathbf{W}_1, \mathbf{W}^{\top}_0, \mathbf{G}]\!]$, where $\mathbf{G}^{:,s}=\text{diag}(\mathbf{D}^{(s)}_{\mathbf{g}})$. This shows that by computing the CPD of the tensor $\mathcal{J}$, the linear transformation matrices $\mathbf{W}_1$ and $\mathbf{W}_0$ can be retrieved as well as information about the internal functions captured by the factor matrix $\mathbf{G}$.

This tensor-based solution strategy introduced by \cite{dreesen2015decoupling} can be summarized as follows
\begin{enumerate}
    \item Evaluate the Jacobian $\mathbf{J}_{\mathbf{f}}(\mathbf{x})$ in sampling points $\mathbf{x}^{(1)}$,$\hdots$, $\mathbf{x}^{(S)}$, yielding evaluations $\mathbf{J}_{\mathbf{f}}(\mathbf{x}^{(s)})$,$\hdots$, $\mathbf{J}_{\mathbf{f}}(\mathbf{x}^{(s)})$.
    \item Construct the tensor $\mathcal{J}$ and compute its CPD, yielding $\mathbf{W}_1$, $\mathbf{W}_0$ and the matrix $\mathbf{G}$ containing information about the internal functions $g_1$, $g_2$,$\hdots$, $g_r$.
    \item Use the information in $\mathbf{G}$ to retrieve representations for the internal functions $g_1$, $g_2$,$\hdots$,$g_r$ of the decoupling.
\end{enumerate}

\section{B-spline parameterization of internal functions}\label{sec:B-spline_parameterization}

\subsection{B-spline representation and factor matrix structure}

To represent the internal functions $g_1$, $g_2$, $\hdots$, $g_r$, of the single-layer decoupling, this work focuses on a B-spline representation. More concretely, each internal function $g_i$, for $i \in [r]$, is parameterized as
\begin{equation}
    g_i(u) = \sum^{\nu}_{j=1} c_{i,j} B^{\Delta_i}_{j,d}(u), \label{eq:Bspline_representation}
\end{equation}
with $B^{\Delta_i}_{j,d}(.)$, for $j \in [\nu]$, the B-spline basis functions and $d$ and $\Delta_i$ the degree of the spline and the knot vector for $g_i$ respectively. Using representation \eqref{eq:Bspline_representation}, the derivative of $g_i(.)$ can be represented as
\begin{align}
    \dfrac{\partial g_i(u)}{\partial u} = g'_i(u) = \sum^{\nu}_{j=1} c_{i,j} B'^{\Delta_i}_{j,d}(u), \label{eq:Bspline_representation_derivative}
\end{align}
where $\widetilde{B}^{\Delta_i}_{j,d}(u)$ denotes the derivative of the B-spline basis function $B^{\Delta_i}_{j,d}(u)$ to $u$.

Given the B-spline representation \eqref{eq:Bspline_representation} of the internal functions $g_i$, we can represent the total number of parameters of the decoupled model in equation \eqref{eq:single_layer_decoupling} as a function of its inputs $m$, outputs $n$, rank $r$, and degrees of freedom $\nu$ of the internal functions. The total number of parameters in the B-spline decoupling (BSD) model is then given by 
\begin{align}
    P_{BSD}(r, \nu) &= mr + rn + r\nu. \label{eq:parameters_BSD}
\end{align}

The proposed B-spline representation and its derivative, shown in equations \eqref{eq:Bspline_representation} and \eqref{eq:Bspline_representation_derivative}, incur a specific structure on the factor matrices $\mathbf{G}$ and $\mathbf{R}$ of the Jacobian tensor $\mathcal{J}$ and the zeroth-order information matrix $\mathbf{F}$. Specifically, for $\mathbf{u}^{(s)} = \mathbf{W}_0\mathbf{x}^{(s)} \in \mathbb{R}^r$, it holds that
\begin{align}
    \mathbf{G} &= \begin{bmatrix}
        g'_1(\mathbf{u}^{(1)}_1) & g'_2(\mathbf{u}^{(1)}_2) & \cdots & g'_r(\mathbf{u}^{(1)}_r) \\
        g'_1(\mathbf{u}^{(2)}_1) & g'_2(\mathbf{u}^{(2)}_2) & \cdots & g'_r(\mathbf{u}^{(2)}_r) \\
        \vdots & \vdots & & \vdots \\
        g'_1(\mathbf{u}^{(S)}_1) & g'_2(\mathbf{u}^{(S)}_2) & \cdots & g'_r(\mathbf{u}^{(S)}_r)
    \end{bmatrix},\\
    \mathbf{R} &= \begin{bmatrix}
        g_1(\mathbf{u}^{(1)}_1) & g_2(\mathbf{u}^{(1)}_2) & \cdots & g_r(\mathbf{u}^{(1)}_r) \\
        g_1(\mathbf{u}^{(2)}_1) & g_2(\mathbf{u}^{(2)}_2) & \cdots & g_r(\mathbf{u}^{(2)}_r) \\
        \vdots & \vdots & & \vdots \\
        g_1(\mathbf{u}^{(S)}_1) & g_2(\mathbf{u}^{(S)}_2) & \cdots & g_r(\mathbf{u}^{(S)}_r)
    \end{bmatrix}.
\end{align}
Given representations \eqref{eq:Bspline_representation} and \eqref{eq:Bspline_representation_derivative}, for each column $\mathbf{G}^{:,i}$ and  $\mathbf{R}^{:,i}$, for $i=1,2,\hdots,r$,
\begin{align}
     \mathbf{G}^{:,i}
     &= 
     \begin{bmatrix}
        B'^{\boldsymbol{\Delta}_i}_{1,d}(\mathbf{u}^{(1)}_1) & B'^{\boldsymbol{\Delta}_i}_{2,d}(\mathbf{u}^{(1)}_1) & \hdots & B'^{\boldsymbol{\Delta}_i}_{\nu,d}(\mathbf{u}^{(1)}_1) \\
        B'^{\boldsymbol{\Delta}_i}_{1,d}(\mathbf{u}^{(2)}_1) & B'^{\boldsymbol{\Delta}_i}_{2,d}(\mathbf{u}^{(2)}_1) & \hdots & B'^{\boldsymbol{\Delta}_i}_{\nu,d}(\mathbf{u}^{(2)}_1) \\
        \vdots & \vdots & & \vdots \\
        B'^{\boldsymbol{\Delta}_i}_{1,d}(\mathbf{u}^{(S)}_1) & B'^{\boldsymbol{\Delta}_i}_{2,d}(\mathbf{u}^{(S)}_1) & \hdots & B'^{\boldsymbol{\Delta}_i}_{\nu,d}(\mathbf{u}^{(S)}_1) \\
    \end{bmatrix}
    \cdot
    \begin{bmatrix}
        c_{i,1} \\
        \vdots \\
        c_{i,\nu}
    \end{bmatrix}
    =
    \mathbf{B}'_i \cdot \mathbf{c}_i, \label{eq:constr_G} \\
    \mathbf{R}^{:,i}
     &= 
     \begin{bmatrix}
        B^{\boldsymbol{\Delta}_i}_{1,d}(\mathbf{u}^{(1)}_1) & B^{\boldsymbol{\Delta}_i}_{2,d}(\mathbf{u}^{(1)}_1) & \hdots & B^{\boldsymbol{\Delta}_i}_{\nu,d}(\mathbf{u}^{(1)}_1) \\
        B^{\boldsymbol{\Delta}_i}_{1,d}(\mathbf{u}^{(2)}_1) & B^{\boldsymbol{\Delta}_i}_{2,d}(\mathbf{u}^{(2)}_1) & \hdots & B^{\boldsymbol{\Delta}_i}_{\nu,d}(\mathbf{u}^{(2)}_1) \\
        \vdots & \vdots & & \vdots \\
        B^{\boldsymbol{\Delta}_i}_{1,d}(\mathbf{u}^{(S)}_1) & B^{\boldsymbol{\Delta}_i}_{2,d}(\mathbf{u}^{(S)}_1) & \hdots & B^{\boldsymbol{\Delta}_i}_{\nu,d}(\mathbf{u}^{(S)}_1) \\
    \end{bmatrix}
    \cdot
    \begin{bmatrix}
        c_{i,1} \\
        \vdots \\
        c_{i,\nu}
    \end{bmatrix}
    =
    \mathbf{B}_i \cdot \mathbf{c}_i. \label{eq:constr_R}
\end{align}

\subsection{Link with polynomial and piece-wise linear decoupling}

\noindent

Existing decoupling methods frequently rely on specific bases for the internal
functions \(g_i\). Polynomial decoupling methods 
\cite{dreesen2015decoupling, hollander2017multivariate}, employ the monomial basis
\[
    P_d := \{ x, x^2, \dots, x^d \},
\]
while the work of Zniyed et al. \cite{zniyed2021tensor} uses a piecewise-linear
basis consisting of shifted ReLU functions,
\[
    L_d := \{ \mathrm{ReLU}(x - t_1), \dots, \mathrm{ReLU}(x - t_d) \},
\]
for breakpoints \(t_1 < \dots < t_d\).
Our B-spline formulation generalizes both constructions.

To formalize this relation, we introduce the following notation.  
For any basis \(B\), let
\[
    \mathcal{G}_B := \operatorname{span}(B)
\]
denote the corresponding scalar function space.  
The space of single-layer decoupled functions of rank \(r\) with internal
functions in \(\mathcal{G}_B\) is defined as
\[
    \mathcal{D}^r_{B}(m,n)
    = 
    \bigl\{
        f : \mathbb{R}^m \to \mathbb{R}^n
        \;\big|\;
        f(x) = \mathbf{W}_1 \mathbf{g}(\mathbf{W}_0 x),\;
        g_i \in \mathcal{G}_B
    \bigr\}.
\]

\paragraph{Characterization of the ReLU span as a strict B-spline subspace:}
Let \(\Delta = \{t_0, t_1, \dots, t_d\}\) with \(t_0 < t_1\).  
The shifted ReLU functions generate continuous piecewise-linear functions
with breakpoints at \(t_1,\dots,t_d\), but with the structural constraint
that the leftmost segment has zero slope:
\[
    g(x) = c_0 + \sum_{j=1}^d c_j\,\mathrm{ReLU}(x - t_j)
    \quad\Longrightarrow\quad
    g'(x) = 0 \text{ for } x < t_1.
\]
Therefore
\[
    \mathcal{G}_{L_d}
    =
    S^{(0)}_{1,\Delta}
    := 
    \left\{
        s \in S_{1,\Delta}
        \;\middle|\;
        s'(x) = 0 \;\text{for}\; x < t_1
    \right\},
\]
which is a strict subspace of the full degree–1 spline space \(S_{1,\Delta}\).

\paragraph{Relations between single-layer decoupling spaces:}
We now analyze the relation between single-layer decoupling spaces constructed using
polynomial, ReLU, and B-spline activations.

\begin{lemma}
\label{lem:intersection}
For any \(d_1,d_2 \in \mathbb{N}\),
\[
    \mathcal{D}^r_{L_{d_1}}(m,n)
    \;\cap\;
    \mathcal{D}^r_{P_{d_2}}(m,n)
    \;=\;
    \{ \text{constant functions} \}.
\]
\end{lemma}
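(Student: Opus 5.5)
The plan is to reduce the statement about decoupled maps to a statement about the internal functions, and then settle the scalar case by comparing regularity. Let $\mathbf{f}\in\mathcal{D}^r_{L_{d_1}}(m,n)\cap\mathcal{D}^r_{P_{d_2}}(m,n)$, so that
\[
\mathbf{f}(\mathbf{x})=\mathbf{W}_1\mathbf{g}(\mathbf{W}_0\mathbf{x})=\widetilde{\mathbf{W}}_1\widetilde{\mathbf{g}}(\widetilde{\mathbf{W}}_0\mathbf{x}),
\]
with $g_i\in\mathcal{G}_{L_{d_1}}$ and $\widetilde g_i\in\mathcal{G}_{P_{d_2}}$. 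The target is to show that $\mathbf{f}$ is constant.

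\textbf{Step 1 (matching the two representations).} I would differentiate both representations and form the Jacobian tensor $\mathcal{J}$ of Section~\ref{sec:single-layer-decoupling}. Its frontal slices then admit two rank-$r$ CPDs, $[\![\mathbf{W}_1,\mathbf{W}_0^\top,\mathbf{G}]\!]$ and $[\![\widetilde{\mathbf{W}}_1,\widetilde{\mathbf{W}}_0^\top,\widetilde{\mathbf{G}}]\!]$. Assuming the mild (Kruskal-type) uniqueness conditions recalled in Section~\ref{sec:background}, the two CPDs coincide up to permutation and scaling. Hence, after relabelling, each $g_i'$ equals a rescaled $\widetilde g_i'$ on the sampled arguments. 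Letting the samples become dense, I would conclude that $g_i'(u)=\lambda_i\widetilde g_i'(\mu_i u)$ as functions.

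\textbf{Step 2 (scalar argument).} Each $g_i'$ is piecewise constant, taking value $0$ for $u<t_1$, because $g_i\in S^{(0)}_{1,\Delta}$ as characterised above. Each $\widetilde g_i'$ is a polynomial. A polynomial that vanishes on the half-line $u<t_1$ vanishes identically. Therefore $g_i'\equiv 0$, so every $\mathbf{g}$-component is constant, and $\mathbf{f}=\mathbf{W}_1\mathbf{g}(\cdot)$ is constant. Conversely, a constant function lies in both spaces: take the ReLU constant term $c_0$, or choose $\mathbf{W}_1$ so that the outputs cancel. This gives equality with the set of constant functions.

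\textbf{Main obstacle.} The hard part is Step 1, because the decoupled map does not by itself force its internal functions to be unique. In particular, combinations such as $\mathrm{ReLU}(u)-\mathrm{ReLU}(-u)=u$ for $r\ge 2$ show that distinct neurons can conspire to produce polynomial, non-piecewise, behaviour. So the reduction must rely on the essential-uniqueness hypothesis on the CPD of $\mathcal{J}$. I would first try to state explicitly the genericity assumptions under which $\mathbf{W}_0,\mathbf{W}_1$ are identifiable, namely $k$-rank conditions on the factors. Under those assumptions, Step 2 applies neuron by neuron. I would flag clearly in the proof that without them the cancellation example shows affine maps can lie in the intersection.
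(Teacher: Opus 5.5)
Your ``main obstacle'' paragraph is the decisive point, and it is right. The lemma is false as stated for $r\ge 2$, for instance whenever $0$ is a breakpoint or the breakpoints are $\pm 1$, so no proof of the displayed equality can exist. If $0$ is a breakpoint, then for nonzero $\mathbf{w}\in\mathbb{R}^m$ and $\mathbf{v}\in\mathbb{R}^n$,
\[
\mathbf{v}\,\mathbf{w}^\top\mathbf{x}=\mathbf{v}\,\mathrm{ReLU}(\mathbf{w}^\top\mathbf{x})-\mathbf{v}\,\mathrm{ReLU}(-\mathbf{w}^\top\mathbf{x}).
\]
This lies in $\mathcal{D}^r_{L_{d_1}}(m,n)$, using rows $\mathbf{w}^\top,-\mathbf{w}^\top$ of $\mathbf{W}_0$ and columns $\mathbf{v},-\mathbf{v}$ of $\mathbf{W}_1$. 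It also lies in $\mathcal{D}^r_{P_{d_2}}(m,n)$, taking $g_1(u)=u$. Yet it is not constant. With breakpoints $\pm1$, the function $g(u)=\mathrm{ReLU}(u+1)+\mathrm{ReLU}(u-1)$ gives $g(z)-g(-z)=2z$, which does the same. The paper's proof is exactly your Step 2, namely the scalar identity $\mathcal{G}_{P_{d_2}}\cap\mathcal{G}_{L_{d_1}}=\{\text{constants}\}$. It then simply asserts that ``the same holds for the corresponding decoupled model classes.'' Your example refutes precisely that lift, so your diagnosis is sharper than the paper's argument.

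The genuine gap is in your repair. Any essential-uniqueness hypothesis on the rank-$r$ CPD of $\mathcal{J}$ rules out vanishing rank-one terms; Kruskal's $k_{\mathbf{W}_1}+k_{\mathbf{W}_0^\top}+k_{\mathbf{G}}\ge 2r+2$ forces $k_{\mathbf{G}}\ge 2$. So no column of $\mathbf{G}$ is zero, whereas Step 2 concludes $g_i'\equiv 0$. Steps 1--2 therefore only show that no $\mathbf{f}$ in the intersection satisfies your hypothesis. Constants violate it too, so you never reach ``$=\{\text{constant functions}\}$''.

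Your converse is also incomplete. Cancelling through $\mathbf{W}_1$ yields only $\mathbf{0}$. With $P_d=\{x,\dots,x^d\}$ as literally defined, every element of $\mathcal{D}^r_{P_{d_2}}(m,n)$ vanishes at $\mathbf{x}=\mathbf{0}$. So nonzero constants belong to the intersection only if a constant term is admitted, as the paper's proof tacitly does.

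Under that convention, three statements are actually true: the scalar identity, the case $r=1$, and, for general $r$, the case of nonzero breakpoints all of one sign. The last needs no CPD at all:
\begin{itemize}
\item $\mathbf{f}$ is polynomial and piecewise affine, hence affine, so its Jacobian jump across every kink hyperplane vanishes.
\item Write $\mathbf{W}_0^{i,:}=\alpha_i\hat{\mathbf{w}}^\top$ and group the ridge terms by direction $\hat{\mathbf{w}}$ and by the sign of $\alpha_i$.
\item The two groups then have their kinks on opposite sides of the origin, so each group is affine on its own.
\item Each group vanishes on a half-line, so it is identically zero.
\end{itemize}
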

\begin{proof}
The polynomial activation space is
\(
    \mathcal{P}_{d_2} = \mathcal{G}_{P_{d_2}},
\)
the set of globally smooth polynomials with max degree $d_2$.  
The ReLU activation space is the restricted spline space
\(S_{1,\Delta}^{(0)}\), whose elements have a constant leftmost piece.
The only polynomials with a constant segment on an interval are constant
functions.  
Thus
\(
    \mathcal{G}_{P_{d_2}} \cap \mathcal{G}_{L_{d_1}}
    = \{\text{constants}\},
\)
and the same holds for the corresponding decoupled model classes.
\end{proof}

\begin{lemma}
\label{lem:relu_subset_spline}
Let \(\Delta = \{t_0,t_1,\dots,t_d\}\) with \(t_0<t_1\).  
Then
\[
    \mathcal{D}^r_{L_d}(m,n)
    \subset
    \mathcal{D}^r_{S_{1,\Delta}}(m,n),
\]
and the inclusion is strict.
\end{lemma}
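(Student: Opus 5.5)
The plan has two parts: first show the inclusion at the level of the scalar activation spaces and lift it to the decoupled classes, then exhibit an explicit decoupled function that separates the two classes. The lifting is immediate from the definition of \(\mathcal{D}^r_B(m,n)\). If \(\mathcal{G}_{L_d} \subseteq \mathcal{G}_{S_{1,\Delta}}\), then any \(f = \mathbf{W}_1\mathbf{g}(\mathbf{W}_0 x)\) with every \(g_i \in \mathcal{G}_{L_d}\) is also a representation with every \(g_i \in \mathcal{G}_{S_{1,\Delta}}\), using the same \(\mathbf{W}_0\) and \(\mathbf{W}_1\).

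For the scalar inclusion, I would use the characterization just given before the lemma: \(\mathcal{G}_{L_d} = S^{(0)}_{1,\Delta}\), the subspace of \(S_{1,\Delta}\) whose elements have zero slope left of \(t_1\). To make this concrete, I would check that each shifted ReLU is a degree-1 spline on \(\Delta\). Indeed, \(\mathrm{ReLU}(x - t_j)\) is continuous and piecewise linear with a single breakpoint at \(t_j \in \Delta\), so it lies in \(S_{1,\Delta}\); constants lie in \(S_{1,\Delta}\) as well. Linearity of \(S_{1,\Delta}\) then gives \(\mathcal{G}_{L_d} \subseteq S_{1,\Delta}\), and hence \(\mathcal{D}^r_{L_d}(m,n) \subseteq \mathcal{D}^r_{S_{1,\Delta}}(m,n)\).

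For strictness, the scalar separation is easy: \(s(x) = x \in S_{1,\Delta}\) but \(s \notin \mathcal{G}_{L_d}\), since its slope left of \(t_1\) is \(1 \neq 0\). The main obstacle is that a strict inclusion of activation spaces does not automatically give a strict inclusion of decoupled classes. Different choices of \(\mathbf{W}_0\) and \(\mathbf{W}_1\) might rebuild the missing function; for example \(x = \mathrm{ReLU}(x) - \mathrm{ReLU}(-x)\) combines two ReLU units with opposite input signs.

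I would therefore pick a witness that needs all \(r\) units. Take \(f(x) = \mathbf{W}_1 \mathbf{g}(\mathbf{W}_0 x)\) with \(\mathbf{W}_0\) of full row rank, \(\mathbf{W}_1\) injective, and each \(g_i(u) = u\), so that \(f(x) = \mathbf{W}_1\mathbf{W}_0 x\). This is a linear map of rank \(r\), and it lies in \(\mathcal{D}^r_{S_{1,\Delta}}(m,n)\). Suppose it also admitted a representation \(f = \widetilde{\mathbf{W}}_1 \widetilde{\mathbf{g}}(\widetilde{\mathbf{W}}_0 x)\) with every \(\widetilde g_i \in \mathcal{G}_{L_d}\). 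Each such \(\widetilde g_i\) is constant for \(u < t_1\). Choose a direction \(x = -\lambda v\), \(\lambda \to \infty\), with \(\widetilde{\mathbf{W}}_0 v\) having all entries strictly positive, whenever such a \(v\) exists. Along this ray every \(\widetilde g_i\) is eventually constant, so \(f\) is eventually constant there, which contradicts \(f\) being linear of rank \(r\) with \(\mathbf{W}_1\mathbf{W}_0 v \neq 0\).

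To guarantee such a \(v\), I would restrict to the generic case \(r \le m\) with \(\widetilde{\mathbf{W}}_0\) of full row rank, where a positive vector lies in its range. If \(\widetilde{\mathbf{W}}_0\) is rank-deficient, the span of the active units' input directions has dimension less than \(r\), so \(\operatorname{rank} f < r\), again a contradiction. If making this rigorous in general proves fiddly, I would fall back on the simplest case \(m = n = r = 1\) and note that the same construction embeds into arbitrary \((m,n,r)\), which suffices to show the inclusion is strict.
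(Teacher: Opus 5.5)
Your inclusion argument is the paper's own. For strictness, the paper only notes that elements of $S_{1,\Delta}$ may have nonzero slope on $(t_0,t_1)$. That proves $\mathcal{G}_{L_d}\subsetneq S_{1,\Delta}$ and stops there. You correctly see that this does not transfer to the decoupled classes. Your rank-$r$ linear witness is a genuinely different and more careful route, and it is valid whenever $r\le\min(m,n)$; you need $r\le n$ too, for $\mathbf{W}_1$ to be injective.

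One step needs a line of justification: you assert $\mathbf{W}_1\mathbf{W}_0v\neq 0$ without reason. It follows from the invariance you already use in the rank-deficient case. Any representation $f(x)=\widetilde{\mathbf{W}}_1\widetilde{\mathbf{g}}(\widetilde{\mathbf{W}}_0x)$ satisfies $f(x+k)=f(x)$ for $k\in\ker\widetilde{\mathbf{W}}_0$, so $\ker\widetilde{\mathbf{W}}_0\subseteq\ker(\mathbf{W}_1\mathbf{W}_0)$, a space of dimension $m-r$. This forces $\operatorname{rank}\widetilde{\mathbf{W}}_0=r$ and equality of the two kernels. Hence $\widetilde{\mathbf{W}}_0v=(1,\dots,1)^\top$ is solvable, and every solution satisfies $v\notin\ker(\mathbf{W}_1\mathbf{W}_0)$. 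This also merges your two cases into one.

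The genuine gap is the fallback for $r>\min(m,n)$. The lemma is a statement about the given $(m,n,r)$, so a separating function for $m=n=r=1$ says nothing about larger $r$. There is also no general embedding, because extra ReLU units can synthesize exactly the linear component your witness relies on; your own identity $x=\mathrm{ReLU}(x)-\mathrm{ReLU}(-x)$ shows this. In fact no argument can close this gap, because the statement itself fails there.

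Read $S_{1,\Delta}$ as the paper's characterization describes it: continuous piecewise-linear functions with breakpoints $t_1,\dots,t_d$, differing from $S^{(0)}_{1,\Delta}=\mathcal{G}_{L_d}$ only in the leftmost slope. Take $m=n=1$, $r=2$, $d=1$, $t_0<t_1=0$. Then $S_{1,\Delta}=\operatorname{span}\{1,u,\mathrm{ReLU}(u)\}$. Since $\mathrm{ReLU}(ax)$ equals $a\,\mathrm{ReLU}(x)$ for $a\ge 0$ and $|a|\,\mathrm{ReLU}(-x)$ for $a<0$, and $x=\mathrm{ReLU}(x)-\mathrm{ReLU}(-x)$, every element of $\mathcal{D}^2_{S_{1,\Delta}}(1,1)$ lies in $V=\operatorname{span}\{1,\mathrm{ReLU}(x),\mathrm{ReLU}(-x)\}$. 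Conversely, $\alpha_0+\alpha_1\mathrm{ReLU}(x)+\alpha_2\mathrm{ReLU}(-x)$ is obtained with $\mathbf{W}_0=[1,\,-1]^\top$, $\mathbf{W}_1=[1,\,1]$, $g_1(u)=\alpha_0+\alpha_1\mathrm{ReLU}(u)$ and $g_2(u)=\alpha_2\mathrm{ReLU}(u)$, both in $\mathcal{G}_{L_1}=S^{(0)}_{1,\Delta}$. Hence $\mathcal{D}^2_{L_1}(1,1)=\mathcal{D}^2_{S_{1,\Delta}}(1,1)=V$, and the inclusion is not strict.

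What your argument actually proves is the lemma under the added hypothesis $r\le\min(m,n)$. You should state it that way rather than appeal to an embedding.
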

\begin{proof}
Every \(g_i \in \mathcal{G}_{L_d} = S^{(0)}_{1,\Delta}\) is by definition an
element of the full spline space \(S_{1,\Delta}\), hence the inclusion holds.
It is strict since functions in \(S_{1,\Delta}\) may have a nonzero slope on
\((t_0, t_1)\), which is impossible for functions in \(S^{(0)}_{1,\Delta}\).
\end{proof}

\begin{lemma}
\label{lem:polynomial_bspline_equiv}
Let \(\Delta = \{a,b\}\) consist of two knots without internal breakpoints.
Then the degree–\(d\) spline space equals the polynomial space:
\[
    S_{d,\Delta} = \mathcal{P}_d.
\]
Consequently,
\[
    \mathcal{D}^r_{P_d}(m,n)
    =
    \mathcal{D}^r_{S_{d,\Delta}}(m,n).
\]
\end{lemma}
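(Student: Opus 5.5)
The plan is to first prove the equality of scalar function spaces $S_{d,\Delta} = \mathcal{P}_d$. The equality of the decoupling classes then follows by the same lifting argument used at the end of Lemma~\ref{lem:intersection}: $\mathcal{D}^r_B(m,n)$ depends on $B$ only through $\mathcal{G}_B$, so equal activation spaces give identical decoupled model classes.

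\textbf{Setup.} With $\Delta=\{a,b\}$ and the clamped convention of Section~\ref{sec:background}, the full knot vector is
\[
\underbrace{a,\dots,a}_{d+1},\underbrace{b,\dots,b}_{d+1}.
\]
The number of internal knots is zero, so the relation $\text{internal\_knots} = \nu - d - 1$ gives $\nu = d+1$. This equals $\dim \mathcal{P}_d$, the space of polynomials of degree at most $d$.

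\textbf{Inclusion $S_{d,\Delta}\subseteq\mathcal{P}_d$.} Since there are no breakpoints in $(a,b)$, every element of $S_{d,\Delta}$ is a single polynomial piece of degree at most $d$ on $[a,b)$. By the stated extrapolation convention, the boundary pieces are extended outside $[a,b]$, so the function is that same polynomial on all of $\mathbb{R}$. Concretely, I would check from the Cox--de Boor recursion, by induction on degree, that on this knot vector the B-splines are the Bernstein polynomials
\[
B_{j,d}(u)=\binom{d}{j-1}\Big(\tfrac{u-a}{b-a}\Big)^{j-1}\Big(\tfrac{b-u}{b-a}\Big)^{d-j+1}, \qquad j=1,\dots,d+1.
\]
The $0/0$ terms in the recursion are taken to be zero.

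\textbf{Reverse inclusion.} The Bernstein polynomials are linearly independent, and there are $d+1$ of them. A dimension count therefore gives $S_{d,\Delta}=\mathcal{P}_d$ as spaces of polynomials on $\mathbb{R}$.

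\textbf{Main obstacle: the constant term.} As defined, $P_d=\{x,\dots,x^d\}$ omits the constant, so $\mathcal{G}_{P_d}$ is a $d$-dimensional space while $S_{d,\Delta}$ contains constants. I would resolve this in one of two ways.
\begin{itemize}
\item Interpret $\mathcal{P}_d$ as all polynomials of degree at most $d$, as in the proof of Lemma~\ref{lem:intersection}.
\item Observe that at the level of $\mathcal{D}^r$, a constant $c_i$ in $g_i$ contributes only the constant vector $\mathbf{W}_1\mathbf{c}$ to $f$. This is a bias absorbed by the zeroth-order offset and invisible to the Jacobian tensor.
\end{itemize}
I would state the first interpretation explicitly and note the second as the reason it is harmless.

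\textbf{Lifting to decoupled models.} Given $f=\mathbf{W}_1\mathbf{g}(\mathbf{W}_0x)$ with each $g_i\in\mathcal{P}_d$, we also have $g_i\in S_{d,\Delta}$ by the space equality, and conversely. The same $\mathbf{W}_0$, $\mathbf{W}_1$ and $r$ then witness membership in the other class, which gives equality of the two decoupled model classes.
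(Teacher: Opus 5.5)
Your proposal is correct and follows the paper's own two-line argument: no internal knots gives a single polynomial piece, and conversely every polynomial of degree at most $d$ is representable in the B-spline basis. Your Bernstein identification, the extrapolation convention and the dimension count $\nu=d+1$ supply the justification that the paper only asserts.

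Your constant-term caveat is also well-founded. With the literal $P_d=\{x,\dots,x^d\}$, every $f\in\mathcal{D}^r_{P_d}(m,n)$ satisfies $f(\mathbf{0})=\mathbf{0}$, while nonzero constants lie in $\mathcal{D}^r_{S_{d,\Delta}}(m,n)$. So the statement genuinely requires your first reading ($\mathcal{P}_d$ including constants, as the paper tacitly uses in the proof of Lemma~\ref{lem:intersection}). Your second bullet only explains why the discrepancy is invisible to $\mathcal{J}$; it does not repair the set equality.
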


\begin{proof}
With no internal knots, splines in \(S_{d,\Delta}\) consist of a single
polynomial piece of degree at most \(d\).  
Conversely, any degree–\(d\) polynomial admits a representation in the
B-spline basis on \(\Delta\).  
\end{proof}

Taken together, Lemmas~\ref{lem:intersection}, \ref{lem:relu_subset_spline}, and \ref{lem:polynomial_bspline_equiv}
show that the B-spline parameterization generalizes both
piecewise-linear and polynomial decoupling.  
By selecting the degree and knot configuration appropriately, the proposed
B-spline framework can recover the expressive power of
polynomial models, ReLU-based models, or more flexible spline-based models.

\section{Optimization problem and algorithm}\label{sec:opt_algorithm}

\subsection{Coupled matrix-tensor factorization}

Section \ref{sec:single-layer-decoupling} showed that computing the canonical polyadic decomposition (CPD) of the Jacobian tensor $\mathcal{J}$ is essential for obtaining a single-layer decoupled representation of the function $\mathbf{f}(\mathbf{x})$. However, since $\mathcal{J}$ contains only first-order information about $\mathbf{f}(\mathbf{x})$, information about the intercepts of the activation functions is lost, causing the B-spline decoupling to be biased toward zero. To address this issue, we follow the approach of Zniyed et al. \cite{zniyed2021tensor} and incorporate zeroth-order information of the system $\mathbf{f}(\mathbf{x})$ into the optimization problem. This zeroth-order information is encapsulated in the matrix $\mathbf{F}$, with structure
\begin{align}
    \mathbf{F} &= \begin{bmatrix}
        \mathbf{f}(\mathbf{x}^{(1)}) & \mathbf{f}(\mathbf{x}^{(2)}) & \cdots & \mathbf{f}(\mathbf{x}^{(S)}) 
    \end{bmatrix} \nonumber \\
    &= \mathbf{W}_1 \cdot \begin{bmatrix}
        \mathbf{g}(\mathbf{u}^{(1)}) & \mathbf{g}(\mathbf{u}^{(2)}) & \cdots & \mathbf{g}(\mathbf{u}^{(S)})
    \end{bmatrix} \nonumber \\
    &= \mathbf{W}_1 \cdot \mathbf{R}^{\top}. \label{eq:decomposition_F}
\end{align}
where $\mathbf{u}^{(s)}=\mathbf{W}_0\mathbf{x}^{(s)}$. The decomposition of the zeroth-order information matrix $\mathbf{F}$ in equation \eqref{eq:decomposition_F} shares the factor matrix $\mathbf{W}_1$ with the CPD of $\mathcal{J}$. The work of Zniyed et al. \cite{zniyed2021tensor} exploits this structure to formulate the single-layer decoupling problem as a \textit{coupled matrix-tensor factorization} (CMTF) \cite{liu2021tensors}. Combining the tensor $\mathcal{J}$ and zeroth-order information matrix $\mathbf{F}$ in a CMTF yields the following optimization problem
\begin{align}
    \!\!\displaystyle{\min_{\substack{\mathbf{W}_1, \mathbf{W}_0, \\ \mathbf{G}, \mathbf{R}}}} & \;\; \lVert \mathcal{J} - [\![\mathbf{W}_1, \mathbf{W}^{\top}_0\!, \mathbf{G}]\!] \rVert^2 + \lambda \lVert \mathbf{F} - \mathbf{W}_1 \mathbf{R}^{\top} \rVert^2, \label{eq:unconstrained_optimization}
\end{align}
where the tensor and matrix are coupled through the factor matrix $\mathbf{W}_1$ and the hyperparameter $\lambda$ determines the weight given to the matrix factorization. 

If the CPD of $\mathcal{J}$ is non-unique or the computed CPD has a nonzero error, then there is no guarantee that the solution of the optimization problem \eqref{eq:unconstrained_optimization} has the required structure for the factor matrices $\mathbf{G}$ and $\mathbf{R}$, described by equations \eqref{eq:constr_G} and \eqref{eq:constr_R}. To solve this, we add the required structure on the columns of $\mathbf{G}$ and $\mathbf{R}$ as hard constraints in the optimization problem. This yields the final optimization problem
\begin{align}
    \!\!\displaystyle{\min_{\substack{\mathbf{W}_1, \mathbf{W}_0, \\\{\mathbf{c}_j\}^r_{j=1}}}} & \;\; \lVert \mathcal{J} - [\![\mathbf{W}_1, \mathbf{W}^{\top}_0\!, \mathbf{G}]\!] \rVert^2 + \lambda \lVert \mathbf{F} - \mathbf{W}_1 \mathbf{R}^{\top} \rVert^2 \label{eq:final_optimization}\\\vspace{-2px}
     \operatorname{s.t.} \quad &\mathbf{G}^{:,i} = \mathbf{B}'_i \cdot \mathbf{c}_{i} \;\;\;\; \text{for } i = 1,2,\hdots,r, \nonumber \\
     &\mathbf{R}^{:,i} = \mathbf{B}_i \cdot \mathbf{c}_{i} \;\;\;\; \text{for } i = 1,2,\hdots,r, \nonumber
\end{align}
which corresponds to the one proposed by Zniyed et al. \cite{zniyed2021tensor}, but with different constraints as a result of the B-spline basis proposed in this work.

\subsection{Robust alternating least squares algorithm}

To solve the optimization problem \eqref{eq:final_optimization}, we employ a projection‑based alternating least‑squares algorithm inspired by the work of Zniyed et al. \cite{zniyed2021tensor}. The full procedure, referred to as R‑CMTF‑BSD, is given in Algorithm~\ref{alg:CMTF-BSD}. The procedure closely follows the CMTF‑BSD algorithm proposed in \cite{de2025non}, but several modifications are introduced to enhance robustness. Namely, a more general normalization scheme and a specific use of Tikhonov regularization. As will be demonstrated in the experiments, these robustness improvements are crucial for achieving accurate and stable performance in the context of transformer and neural network compression.

\begin{algorithm}[t]
    \caption{R-CMTF-BSD algorithm}
    \label{alg:CMTF-BSD}
    \begin{algorithmic}[1]
        \Require $\mathcal{J} \in \mathbb{R}^{n \times m \times S}, \mathbf{F} \in \mathbb{R}^{n \times S}, df, d, r,\text{samples}\in \mathbb{R}^{m \times S}$, $\epsilon$
        \State $\mathbf{W}_1, \mathbf{W}_0,  \mathbf{G}, \mathbf{R} \gets$ Random initialization
        \While{stop criteria not met}
            \State $\mathbf{W}_1, \mathbf{W}_0, \mathbf{G}, \mathbf{R} \gets \text{Normalize\_Factors}(\mathbf{W}_1, \mathbf{W}_0, \mathbf{G}, \mathbf{R}, \epsilon)$
            \vspace{8px}
            \State $\mathbf{W}_1 \gets \displaystyle{\argmin_{\mathbf{W}_1}} \lVert \text{unfold}_1(\mathcal{J}) - \mathbf{W}_1 \cdot ( \mathbf{G} \odot \mathbf{W}^{\top}_0)^{\top} \rVert^2 + \lambda \lVert \mathbf{F} - \mathbf{W}_1 \mathbf{R}^{\top} \rVert^2 + \mu_{\mathbf{W}} \lVert \mathbf{W}_1 \rVert^2$
            \vspace{2px}
            \State $\mathbf{W}_0 \gets \displaystyle{\argmin_{\mathbf{W}_0}} \lVert \text{unfold}_2(\mathcal{J}) - \mathbf{W}^{\top}_0 \cdot ( \mathbf{G} \odot \mathbf{W}_1)^{\top} \rVert^2  + \mu_{\mathbf{W}} \lVert \mathbf{W}_0 \rVert^2$
            \vspace{2px}
            \State $\mathbf{G} \gets \displaystyle{\argmin_{\mathbf{G}}} \lVert \text{unfold}_3(\mathcal{J}) - \mathbf{G} \cdot (\mathbf{W}^{\top}_0 \odot \mathbf{W}_1)^{\top} \rVert^2$
            \State $\mathbf{R} \gets \displaystyle{\argmin_{\mathbf{R}}} \lVert \mathbf{F} - \mathbf{W}_1 \cdot \mathbf{R}^{\top} \rVert^2$
            \vspace{8px}
            \State $\text{xSamples} \gets \mathbf{W}_0 \cdot$ samples
            \State $\mathbf{G}, \mathbf{R} \gets \text{Bspline\_projection}\left(\mathbf{G}, \mathbf{R}, df, d, \text{xSamples}\right)$
        \EndWhile
    \Ensure $\mathbf{W}_1, \mathbf{W}_0, \mathbf{G}, \mathbf{R}$
    \end{algorithmic}
\end{algorithm}
\begin{algorithm}[t!]
    \caption{Normalize\_Factors}
    \label{alg:normalize_factors}
    \begin{algorithmic}[1]
        \Require $\mathbf{W}_1, \mathbf{W}_0, \mathbf{G}, \mathbf{R}$, $\epsilon$
        \vspace{5px}
        \For{$i=1,2,\hdots,r$}
            \State $\beta_0 \gets \lVert \mathbf{W}_0^{i,:} \rVert_F$ 
            \State $\beta_1 \gets \lVert \mathbf{W}^{:,i}_1 \rVert_F$
            \vspace{8px}
            \State \algorithmicif $ \;\; \beta_0 < \epsilon$ \algorithmicthen $ \;\; \beta_0 \gets \epsilon$
            \State \algorithmicif $ \;\; \beta_1 < \epsilon$ \algorithmicthen $ \;\; \beta_1 \gets \epsilon$
            \vspace{8px}
            \State $\mathbf{W}_0^{i,:} \gets \mathbf{W}_0^{i,:} / \beta_0$
            \State $\mathbf{W}_1^{:,i} \gets \mathbf{W}_1^{:,i} / \beta_1$
            \vspace{8px}
            \State $\mathbf{G}^{:,i} \gets \beta_0\beta_1\mathbf{G}^{:,i}$
            \State $\mathbf{R}^{:,i} \gets \beta_1\mathbf{R}^{:,i}$
        \EndFor
        \vspace{5px}
        \Ensure $\mathbf{W}_0, \mathbf{W}_1, \mathbf{G}, \mathbf{R}$
    \end{algorithmic}
\end{algorithm}
\begin{algorithm}[t]
    \caption{Bspline\_projection}
    \label{alg:Bspline_projection}
    \begin{algorithmic}[1]
        \Require $\mathbf{G}, \mathbf{R}, df, d, \text{xSamples} \in \mathbb{R}^{r \times S}$
        \vspace{5px}
        \For{$i=1,2,\hdots,r$}
            \State $\mathbf{x}_i \gets $ xSamples$^{i,:}$
            \vspace{8px}
            \State Internal\_knots $ \gets \nu - d - 1$
            \State $\mathbf{q} \gets \text{Quantiles}(\mathbf{x}_i, \text{Internal\_knots})$
            \State $\boldsymbol{\Delta}_i \gets \mathbf{0}_{\nu}$
            \For{$j=1,2,\hdots,\nu$}
                \State $\boldsymbol{\Delta}_{i,j} \gets \displaystyle{\argmin_{x}} \lVert \mathbf{q}_{j} - \mathbf{x}_{i,j} \rVert^2$, with $x \in \{ \mathbf{x}_{i,1}, \cdots, \mathbf{x}_{i,S} \}$
            \EndFor
            \vspace{8px}
            \State $\mathbf{B}_i \gets \text{Design\_matrix}(\mathbf{x}_i, \mathbf{\Delta}_i, df, d)$
            \State $\mathbf{B}'_i \gets \text{Differentiate}(\mathbf{B})$
            \vspace{8px}
            \State $\mathbf{c}_i \gets \displaystyle{\argmin_{\mathbf{c}_i}} \lVert \mathbf{G}^{:,i} - \mathbf{B}'_i \cdot \mathbf{c}_i \rVert^2 + \lambda \lVert \mathbf{R}^{:,i} - \mathbf{B}_i \cdot \mathbf{c}_i \rVert^2 + \mu_{\mathbf{c}} \lVert \mathbf{c}_i \rVert^2$
            \State $\mathbf{G}^{:,i} \gets \mathbf{B}'_i \cdot \mathbf{c}_i,\;\; \mathbf{R}^{:,i} \gets \mathbf{B}_i \cdot \mathbf{c}_i$
        \EndFor
        \vspace{5px}
        \Ensure $\mathbf{G}, \mathbf{R}$
    \end{algorithmic}
\end{algorithm}

In Algorithm~\ref{alg:CMTF-BSD}, the factor matrices are first normalized according to Algorithm~\ref{alg:normalize_factors}. This procedure applies a norm-floor strategy that replaces norms below a threshold $\epsilon$ with $\epsilon$, preventing divisions by $0$. Next, the factor matrices $\mathbf{W}_1$, $\mathbf{W}_0$, $\mathbf{G}$, and $\mathbf{R}$ are updated (in that order) using linear least-squares updates, ignoring non-linear dependencies between the factor matrices. These updates correspond to lines $4$ to $7$ in Algorithm~\ref{alg:CMTF-BSD}. The updates of $\mathbf{W}_1$ and $\mathbf{W}_0$ contain an additional Tikhonov regularization term with parameter $\mu_{\mathbf{W}}$. After updating each of the factor matrices, the columns of $\mathbf{G}$ and $\mathbf{R}$, denoted by $\mathbf{G}^{:,i}$ and $\mathbf{R}^{:,i}$ are projected back into the column space of $\mathbf{B}'_i$ and $\mathbf{B}_i$ respectively, for $i=1,\hdots,r$, as described by the constraints in optimization problem \eqref{eq:final_optimization}. This projection occurs in Algorithm~\ref{alg:CMTF-BSD} on line $9$, inside the Bspline\_projection(.) function, which is given in Algorithm~\ref{alg:Bspline_projection}.

For the Bspline\_projection(.) in Algorithm~\ref{alg:Bspline_projection}, an important question is how to determine the knots for the spline basis. Namely, the distribution of inputs to the activations of the decoupling can change every iteration due to the update of $\mathbf{W}_0$. Ideally, the knots are updated whenever the inputs to the activations change. 

Since knot-selection is executed once per activation per iteration, it is important to keep the additional computation cost for the knot-selection low. Thus, we choose to select knots based on quantiles of the inputs. This quantile knot selection corresponds to lines $3$ to $8$ in Algorithm~\ref{alg:Bspline_projection}. Note that lines $6$ to $8$ are essential since the computed quantiles may not be components of the vector of input elements $\mathbf{x}_i$ and thus each quantile is mapped to the closest input. 

On line $11$, the coefficients $\mathbf{c}_i$ of the $i$th internal function $g_i$ are updated based on the columns $\mathbf{G}^{:,i}$ and $\mathbf{R}^{:,i}$. This update includes an additional Tikhonov regularization term, so instead of regularizing the least squares updates of $\mathbf{G}$ and $\mathbf{R}$ directly, we regularize their projections. The updated coefficients are then used in line $12$ to project $\mathbf{G}^{:,i}$ and $\mathbf{R}^{:,i}$ back into the column space of $\mathbf{B}'_i$ and $\mathbf{B}_i$ respectively.

This section introduced a norm-floor hyperparameter $\epsilon$ and two Tikhonov regularization parameters, $\mu_{\mathbf{c}}$ and $\mu_{\mathbf{W}}$. For the remainder of this work, these parameters are fixed to $\epsilon=1e-6$, $\mu_{\mathbf{c}}=1e-5$, and $\mu_{\mathbf{W}}=1e-4$. The following section discusses transformer models and how the R-CMTF-BSD algorithm can be applied to compress them.

\section{Transformers and compression via decoupling} \label{sec:transformer_compression}

\subsection{Transformer models}\label{sec:transformer_model}

Transformer models were originally introduced by Vaswani et al. \cite{vaswani2017attention} in the context of neural sequence transduction tasks such as machine translation. At a high level, a Transformer processes structured input representations (such as a sequence) using stacked transformer blocks, each of which combines an attention-based token mixing operation with a point-wise fully connected neural network (FCNN), connected through residual connections and normalization layers (see the left block in Figure \ref{fig:FCNN_compression}).

In their original formulation, Vaswani et al. \cite{vaswani2017attention} proposed a full encoder–decoder architecture, where both the encoder and the decoder are composed of such transformer blocks, differing mainly in their attention mechanisms. Following this initial work, numerous Transformer variants have been introduced that adapt this building block to different tasks and data modalities. In particular, encoder‑only and decoder‑only architectures have become common. Some examples are vision (ViT) \cite{dosovitskiy2020image} and shifted window (Swin) transformers \cite{liu2021swin} for vision tasks, BERT \cite{devlin2019bert}, RoBERTa \cite{liu2019roberta}, and GPT \cite{brown2020language} for natural language tasks, and Chronos \cite{ansari2024chronos} for time series. 

The targets of this work are transformers for vision tasks, namely ViT and Swin. The next section discusses how and why we apply the decoupling procedure to compress transformer models.

\begin{figure}[h!]
    \centering
    \includegraphics[width=0.65\linewidth]{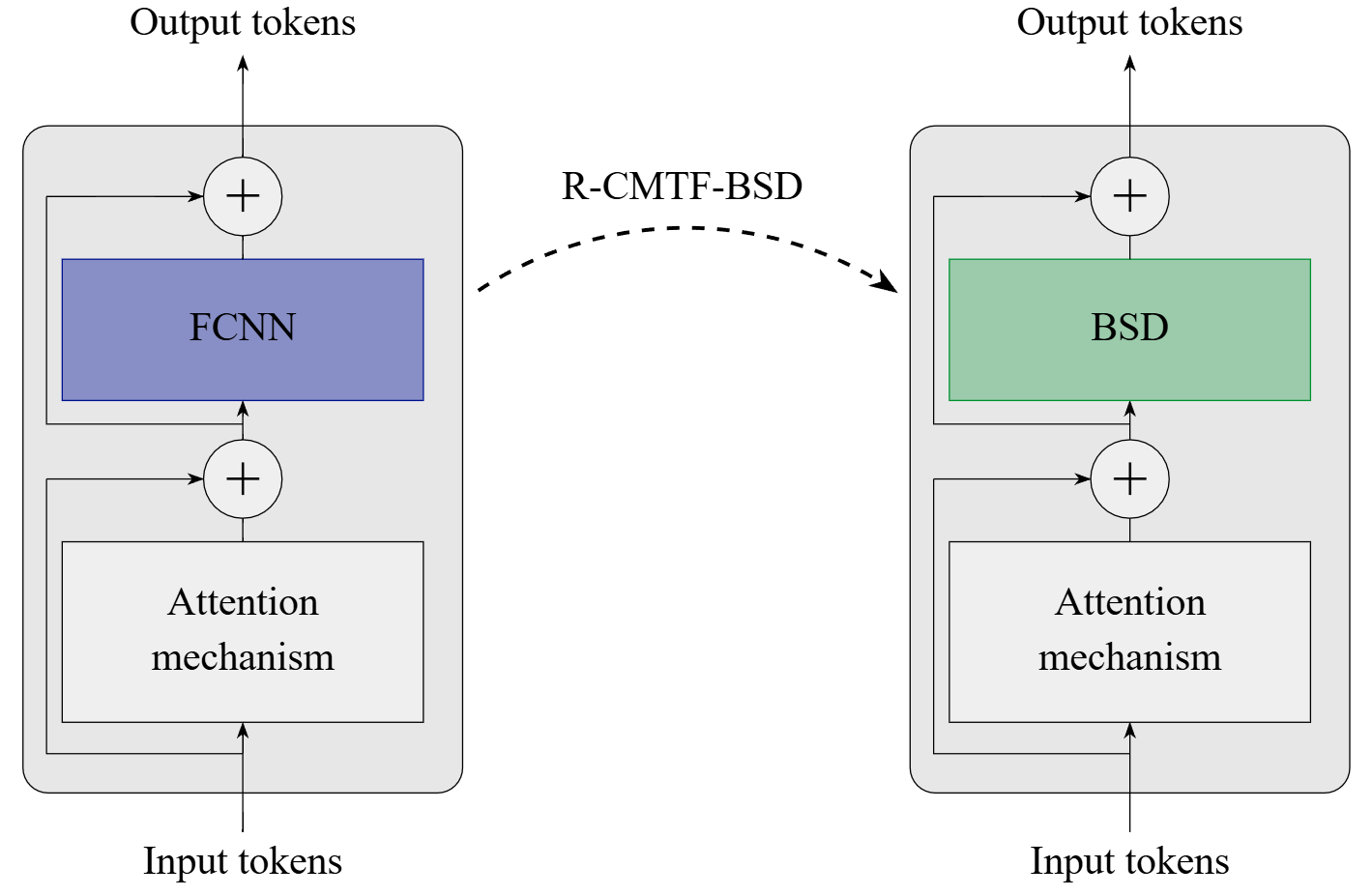}
    \caption{General transformer encoder block with attention mechanism followed by channel-wise FCNN (Left) and compressed transformer block with the FCNN replaced by a B-spline decoupling, computed by the R-CMTF-BSD Algorithm~\ref{alg:CMTF-BSD} (right).}
    \label{fig:FCNN_compression}
\end{figure}

\subsection{B-spline decoupling based compression}\label{sec:B-spline_compression}

We apply the R-CMTF-BSD Algorithm~\ref{alg:CMTF-BSD} to compress and replace the fully connected neural network (FCNN) blocks inside transformer architectures (see Figure \ref{fig:FCNN_compression}). The computed decoupled representation serves as an approximate replacement of the original FCNN.

Each transformer block takes an input $\mathbf{x} \in \mathbb{R}^m$, where $m$ is the embedding dimension, and passes it through an FCNN that expands the input by a factor $\delta \ge 1$ (the MLP ratio). After the hidden layer, the output is projected back to dimension $m$. Thus, each FCNN block can be written as a vector-valued function $\mathbf{f}: \mathbb{R}^{m} \rightarrow \mathbb{R}^{m}$,
\begin{align}
    \mathbf{f}(\mathbf{x}) = \mathbf{V}_1 \mathbf{h}(\mathbf{V}_0\mathbf{x}), \label{eq:fcnn_function}
\end{align}
where $\mathbf{V}_1 \in \mathbb{R}^{m \times \delta m}$, $\mathbf{V}_0 \in \mathbb{R}^{\delta m \times m}$ and 
$
\mathbf{h}(\mathbf{u}) = \begin{bmatrix}
    h(u_1) & h(u_2) & \cdots & h(u_{\delta m})
\end{bmatrix} \in \mathbb{R}^{\delta m}
$
collects fixed univariate activation functions. The number of parameters of the FCNN, denoted by equation \eqref{eq:fcnn_function}, is
\begin{align}
    P_{FC}(\delta, m) = 2\delta m^2 \label{eq:params_FCNN}
\end{align}
For compression using CMTF-BSD, we can consider two cases of interest. First, for fixed rank $r$, we seek conditions on the spline degrees of freedom $\nu$ such that
\begin{align}
    P_{BSD}(r, \nu) < P_{FC}(\delta, m). \label{eq:condition_r}
\end{align}
Similarly, for fixed $\nu$, we may seek conditions on $r$ such that \eqref{eq:condition_r} holds. The following lemma characterizes these conditions.
\begin{lemma}
    Inequality \eqref{eq:condition_r} holds if and only if
    \begin{equation}
        r < \dfrac{\delta m^2}{m + \frac{\nu}{2}} \quad \text{ and } \quad \nu < \dfrac{2m(\delta m - r)}{r}.
    \end{equation}
    \begin{proof}
        Substituting the expressions \eqref{eq:params_FCNN}, \eqref{eq:parameters_BSD} of $P_{FC}(\delta, m)$ and $P_{BSD}(r, \nu)$ into \eqref{eq:condition_r} gives
        $
            2 r  m + r \nu < 2 \delta m^2. \nonumber
        $
        Rearranging gives the stated bounds. 
    \end{proof}
\end{lemma}

\subsection{Full network compression procedure}\label{sec:procedure_compression}

As mentioned in Section \ref{sec:B-spline_compression}, in the compression setting, the decoupled representation serves as an approximate replacement of the original FCNN. As a result, the overall model performance (accuracy, F1 score, etc) may decrease slightly. However, because the decoupled representation remains a feedforward neural network with flexible activation functions, it still allows for backpropagation when integrated into the transformer model. This allows the modified model to be fine‑tuned on the full dataset, recovering part of the lost performance.

The full network compression procedure can be summarized as follows:

\begin{enumerate}
    \item \textbf{Select an FCNN within a transformer-encoder block}, e.g., the one with the largest amount of parameters or greatest compression potential.
    \item \textbf{Choose a small subset of training data}, e.g., a fixed number of samples per class, and compute the Jacobian $\mathcal{J}$ and zeroth-order information matrix $\mathbf{F}$ for the chosen FCNN.
    \item \textbf{Compute a decoupled representation of the FCNN} using the R-CMTF-BSD algorithm with $\mathcal{J}$ and $\mathbf{F}$.
    \item \textbf{Replace the original FCNN in the ViT with the decoupled version} and measure how the substitution affects performance.
    \item \textbf{If needed, finetune the ViT} until performance reaches an acceptable threshold.
    \item \textbf{Repeat steps 1-5} for FCNNs of remaining transformer-encoder blocks if additional compression is desired.
\end{enumerate}

This work evaluates two simple compression procedures, termed \textit{front to back} (FB) and \textit{back to front} (BF). Both share some design choices: for step $2$, the subset of training data is a randomly selected subset of $S$ data points, for step $5$, the full network is finetuned for $5$ epochs after each compression step. The methods differ only in compression order: FB compresses transformer blocks sequentially from the first to the last block, while BF starts from the final block and proceeds toward the front.

\begin{remark}
    While not the focus of this work, more principled approaches to the compression procedure are possible. For example, a method based on the Fisher information metric \cite{amari2000methods} of each block, i.e., in each step, determines and compresses the block with the lowest Fisher information.
\end{remark}
    
\section{Experiments}\label{sec:experiments}

\subsection{General setup and evaluation metrics}

The following sections use CMTF-BSD to refer to the version of the R-CMTF-BSD Algorithm~\ref{alg:CMTF-BSD} without normalization and without Tikhonov regularization. These algorithms are used to compute decoupled representations $\widehat{\mathbf{f}}(\mathbf{x})$ that approximate an FCNN, represented by the function $\mathbf{f}(\mathbf{x})$, of the transformer block(s) under consideration. The results of the B-spline decoupling are compared with the polynomial decoupling results. The polynomial decoupling is computed using the structure of the R-CMTF-BSD Algorithm~\ref{alg:CMTF-BSD}, but with polynomial internal functions of a predetermined degree $d$. The polynomial setup is referred to as R-CMTF-PD in the following sections, or as CMTF‑PD when normalization and regularization are omitted.

The computed decoupling is evaluated based on normalized mean-squared errors of $\mathcal{J}$ and $\mathbf{F}$, denoted by Error($\mathcal{J}$) and Error($\mathbf{F}$) respectively, and defined as
\begin{equation}
    \text{Error}(\mathcal{J}) = \dfrac{\lVert \mathcal{J} - \hat{\mathcal{J}} \rVert^2}{\lVert \mathcal{J} \rVert^2}, \; \text{Error}(\mathbf{F}) = \dfrac{\lVert \mathbf{F} - \hat{\mathbf{F}} \rVert^2}{\lVert \mathbf{F} \rVert^2}. \nonumber
\end{equation}
The approximation quality of the decoupled model with respect to the function $\mathbf{f}(\mathbf{x})$ is evaluated using the relative root mean-squared error $e_i$ per output $f_i(\mathbf{x})$, defined as a percentage
\begin{equation}
    e_i = \sqrt{\dfrac{\sum^S_{s=1}(f_i(\mathbf{x}^{(s)}) - \widehat{f}_i(\mathbf{x}^{(s)}))^2}{\sum^S_{s=1}(f_i(\mathbf{x}^{(s)}) - \mathbb{E}[f_i])^2}} \times 100, \nonumber
\end{equation}
with $\widehat{\mathbf{f}}(\mathbf{x})$ the computed decoupling that approximates $\mathbf{f}(\mathbf{x})$. To evaluate network compression, we introduce two metrics, the savings percentage (SP) and accuracy drop (AD), defined as 
\begin{gather}
    \text{SP} = \left(1 - \dfrac{\text{\# Parameters in decoupling}}{\text{\# Parameters in original model}}\right) \times 100, \nonumber \\
    \text{AD} = Acc_B - Acc_A, \nonumber
\end{gather}
where $Acc_B$ and $Acc_A$ denote the accuracy of the model under consideration before and after compression, respectively.

\subsection{Synthetic example}

We start with a simple synthetic example to show the B-spline decoupling procedure in a function approximation context. For the example, the R-CMTF-BSD (Algorithm~\ref{alg:CMTF-BSD}) and R-CMTF-PD algorithm compute a decoupled representation of the following system $\mathbf{f}: \mathbb{R}^2 \rightarrow \mathbb{R}^2$,
\begin{align}
    \mathbf{f}(\mathbf{x}) = \begin{bmatrix}
        f_1(\mathbf{x}) \\
        f_2(\mathbf{x})
    \end{bmatrix} = \begin{bmatrix}
        sin\left(0.2 \cdot \pi \cdot (x_1 + x_2) + 2.5\right) \\
        tanh(x_1 + x_2) - 5
    \end{bmatrix}. \label{eq:synthetic}
\end{align}

Figure \ref{fig:synthetic_example} presents the component functions $\widehat{f}_1$ and $\widehat{f}_2$, obtained with the R-CMTF‑PD algorithm using a polynomial of degree $d=10$ and with the R-CMTF‑BSD algorithm using B‑splines with degrees of freedom $\nu = 11$ and spline degrees $d=10$, $1$, and $3$. These choices of B-spline correspond to, respectively, a single polynomial, a piecewise linear representation, and a cubic B‑spline representation, highlighting their versatility in configuration. Both the polynomial and the B‑spline, $d=10$, accurately approximate $f_1$, although the polynomial exhibits noticeable deviations in the tails of $f_2$. As expected, the piecewise linear representation captures the general shape of the functions but lacks precision. The cubic B‑spline provides a good approximation of $f_1$ and avoids the tail deviations observed for $f_2$ in the polynomial case.

Table \ref{tab:synthetic_example} reports the relative root‑mean‑squared errors $e_i$ for each output, confirming these observations. In the polynomial and B‑spline, $d=10$ cases, the error for the second output exceeds $13\%$, with both methods yielding similar performance.  The piecewise linear B‑spline results in larger errors for both outputs, reflecting its limited suitability for regression problems. In contrast, the cubic B‑spline representation achieves low errors across both outputs, resulting in the lowest mean error overall.

\begin{figure}
    \centering
    \includegraphics[width=\linewidth]{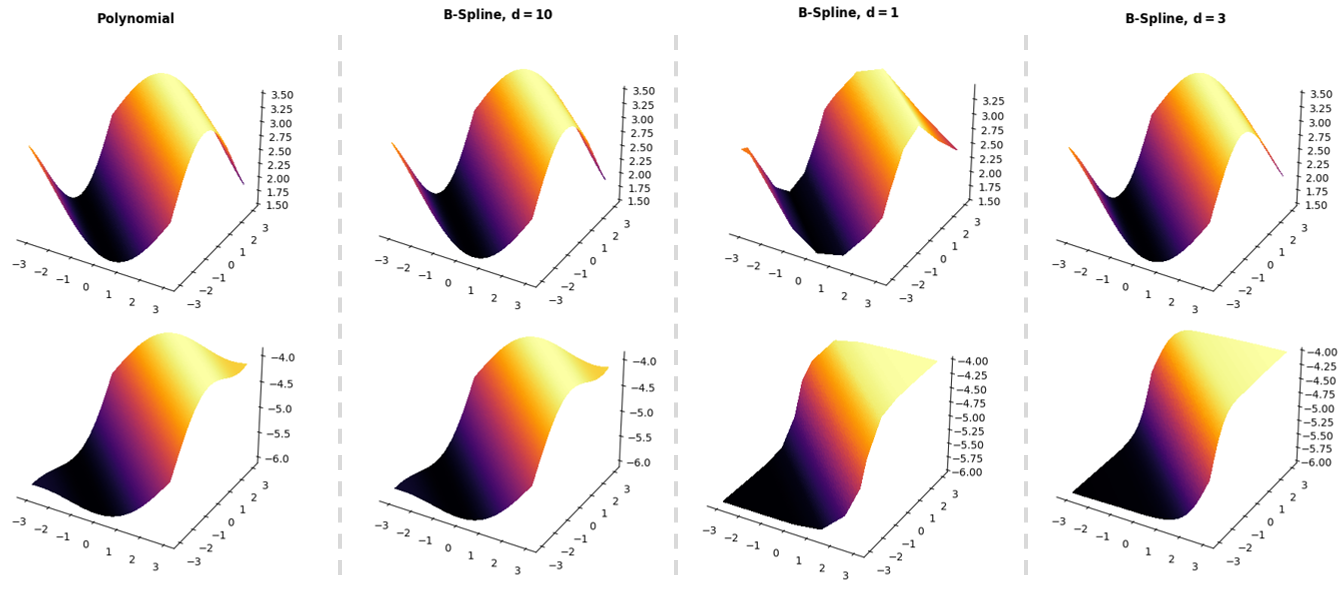}
    \caption{Approximations of the component functions $f_1$ (first row) and $f_2$ (second row) of the system $\mathbf{f}(\mathbf{x})$ in equation \eqref{eq:synthetic}, computed by the R-CMTF-PD algorithm (first column) with $d=10$ and the R-CMTF-BSD algorithm with $\nu = 11$ and $d = 10, 1, 3$ (second, third and fourth column respectively). All decoupled representations use rank $r=3$. The cubic spline representation (last column) provides the most accurate approximation of the underlying functions.}
    \label{fig:synthetic_example}
\end{figure}

\begin{table}[h]
    \centering
    \caption{Relative root mean-squared errors $e_i$ of the computed decouplings $\widehat{\mathbf{f}}(\mathbf{x})$ for the outputs of $\mathbf{f}(\mathbf{x})$ in equation \eqref{eq:synthetic}. For each output, the lowest achieved error is highlighted in bold.}
    \label{tab:synthetic_example}
    \begin{tabular}{|m{1.5cm}|C{2.5cm}|C{1.5cm}|C{1.5cm}|C{1.5cm}|}
        \hline
         & R-CMTF-PD & \multicolumn{3}{c|}{R-CMTF-BSD, $\nu = 11$} \\
         \hhline{-----}
        $\mathbf{f}(\mathbf{x})$ & $d=10$ & $d=10$ & $d=1$ & $d=3$ \\
        \hline
        $e_1$ ($\%$) & $0.39$ & $\mathbf{0.25}$ & $9.78$ & $2.63$ \\
        \hline
        $e_2$ ($\%$) & $13.68$ & $13.64$ & $4.55$ & $\mathbf{1.66}$\\
        \hline
        Mean ($\%$) & $7.04$ & $6.95$ & $7.17$ & $\mathbf{2.15}$\\
        \hline
    \end{tabular}
\end{table}

\subsection{Compressing a single transformer block} \label{subsection:single-transformer-block}

The experiments in this section aim to help understand the behaviour of the B-spline decoupling compression procedure under different hyperparameter configurations. The focus is to perform an ablation study by applying the decoupling procedure to a single transformer block and compress the FCNN module. The experiments use a Vision transformer (ViT) model trained on the MNIST dataset \cite{deng2012mnist}, whose FCNN of the first block is compressed using the (R-)CMTF-BSD algorithm, as well as their polynomial variants (R-)CMTF-PD. This allows to compare results before and after the robustness measures are added, showcasing their necessity. The trained ViT has an accuracy of $97.57\%$ on the test set. Details about model architecture and training procedure are summarized in \ref{app:ViT_details}. 

\begin{figure}
    \centering
    \begin{subfigure}{\textwidth}
        \includegraphics[width=\linewidth]{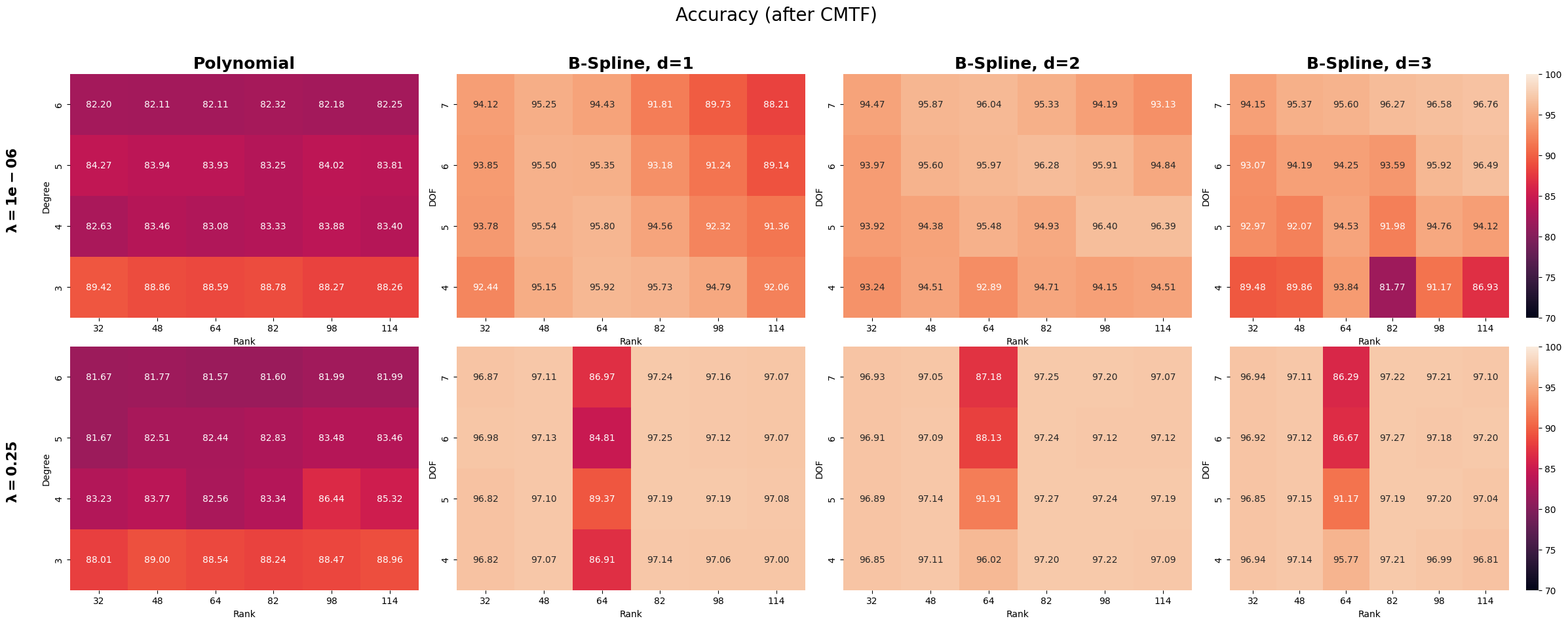}
    \end{subfigure}
    \vspace{5px}
    \textcolor{gray}{\hdashrule[0.5ex]{\linewidth}{1pt}{4pt 4pt}}
    \vspace{5px}
    \begin{subfigure}{\textwidth}
        \includegraphics[width=\linewidth]{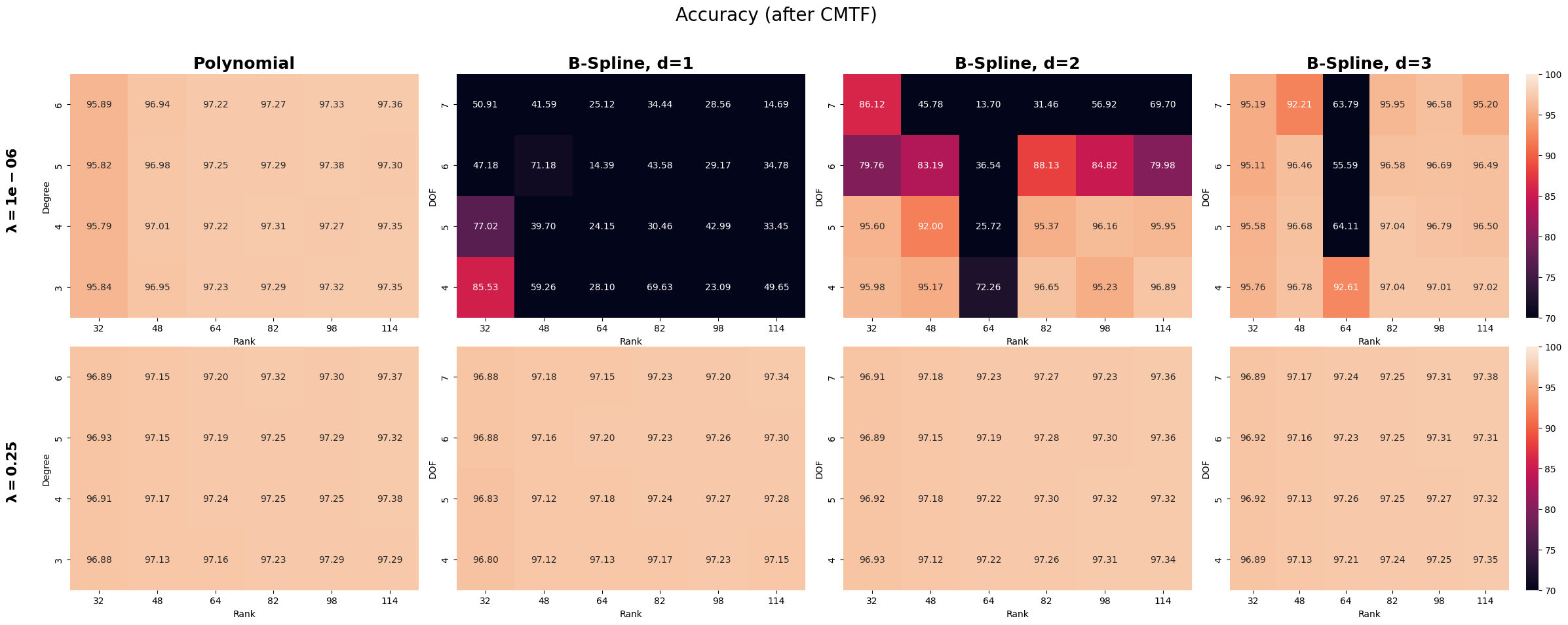}
    \end{subfigure}
    \caption{Mean top-$1$ accuracy results for the (top) CMTF-BSD and CMTF-PD algorithms and (bottom) R-CMTF-BSD and R-CMTF-PD algorithms. Each result is over $5$ runs for different hyperparameter configurations. Results in row one have $\lambda$ set to $1e-6$, in row two $\lambda$ is set to $0.25$. The first column contains (R-)CMTF-PD results with polynomials of degree $d\in \{3, 4, 5, 6\}$ (y-axis), columns two, three and four contain (R-)CMTF-BSD results with degrees of freedom $\nu \in \{4, 5, 6, 7\}$ (y-axis) and degrees $1$, $2$ and $3$ for the B-spline activations respectively. For each heatmap, the x-axis denotes the considered rank values $r\in \{20, 40, 64, 80, 100, 120\}$. The results show that CMTF-BSD is more accurate and stable than CMTF-PD, which suffers from numerical instability. The robust variants remove these issues, with R‑CMTF-PD and R-CMTF-BSD performing best and comparably at $\lambda=0.25$.}
    \label{fig:ablation_acc}
\end{figure}

For the ablation study, the decoupling algorithms are executed under different hyperparameter configurations. More specifically, all possible combinations of $\lambda \in \{1e-6, 0.25\}$ and rank $r \in \{32, 48, 64, 82, 98, 114\}$ are used for the (R-)CMTF-BSD and (R-)CMTF-PD algorithm. For each execution of the algorithms, all internal functions use the same number of coefficients. Namely, each internal function is represented with $4$ to $7$ coefficients, i.e., the degree $d$ of the polynomial activations is varied from $3$ to $6$ and the degrees of freedom $\nu$ of the B-spline representation is varied from $4$ to $7$. The degree of the B-spline representation is also varied (which doesn't change the number of coefficients) from $1$ to $3$, yielding piece-wise linear, quadratic and cubic splines. 

For a more direct comparison of results, the same Jacobian tensor $\mathcal{J}$ and zeroth-order information matrix $\mathbf{F}$ are used for each run, with the main difference between runs being the values with which the decoupling factor matrices are initialized. To construct $\mathcal{J}$ and $\mathbf{F}$, $S=128$ sampling points are used, chosen randomly from the training set. Thus, $\mathcal{J} \in \mathbb{R}^{64 \times 64 \times 128}$ and $\mathbf{F} \in \mathbb{R}^{64 \times 128}$ and the amount of samples per MNIST class can differ.

The heatmaps in Figure \ref{fig:ablation_acc} display the mean accuracy results, after compressing the first transformer block, for the discussed hyperparameter configurations over $5$ executions of the respective decoupling for both the standard (top) and robust versions (bottom) of the algorithms. Similarly, Figures \ref{fig:ablation_Jac} and \ref{fig:ablation_F} show the mean Error($\mathcal{J}$) and Error($\mathbf{F}$) for only the R-CMTF-BSD and R-CMTF-PD algorithms over the same $5$ executions as in Figure \ref{fig:ablation_acc}. 

Figure \ref{fig:ablation_acc} shows that for the non‑robust polynomial case, the results are consistently poor, regardless of the hyperparameter settings. The non‑robust B‑spline approach shows some improvement over the polynomial formulation, particularly for $\lambda=0.25$. However, a numerical issue arises when the rank $r=64$, which is exactly when inputs $=$ outputs $=$ rank. Also, for $\lambda=1e-6$, the B-spline results are not as stable as we would expect, likely due to numerical instabilities.

\begin{figure}[h]
    \centering
    \includegraphics[width=\linewidth]{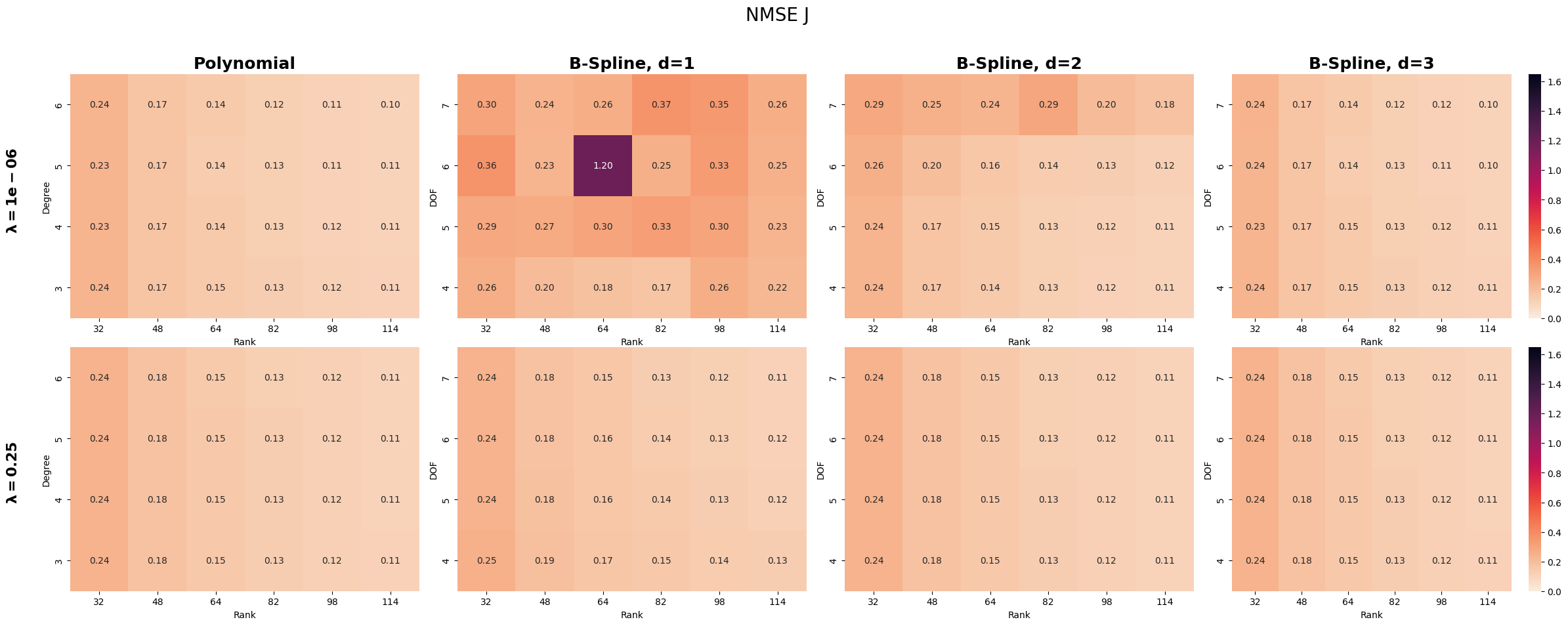}
    \caption{Mean Error($\mathcal{J}$) results for the R-CMTF-BSD and R-CMTF-PD algorithms over $5$ runs for different hyperparameter configurations. The figure layout and axis are as mentioned in the caption of Figure \ref{fig:ablation_acc}. The results indicate that the R‑CMTF-PD and R-CMTF-BSD algorithms perform best and comparably at $\lambda=0.25$.}
    \label{fig:ablation_Jac}
\end{figure}
\begin{figure}[h]
    \centering
    \includegraphics[width=\linewidth]{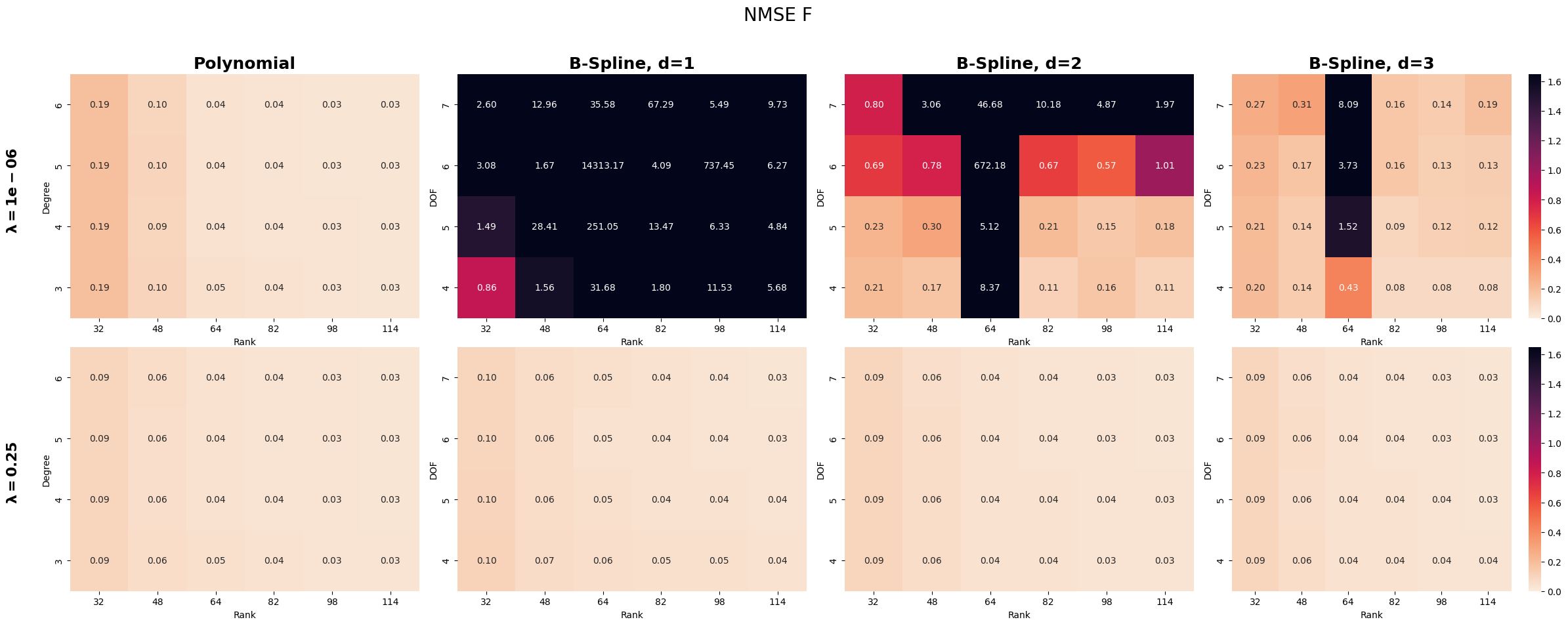}
    \caption{Mean Error($\mathbf{F}$) results for the R-CMTF-BSD and R-CMTF-PD algorithms over $5$ runs for different hyperparameter configurations. The figure layout and axis are as mentioned in the caption of Figure \ref{fig:ablation_acc}. Similarly to Figures \ref{fig:ablation_acc} and \ref{fig:ablation_Jac}. The results indicate that the R‑CMTF-PD and R-CMTF-BSD algorithms perform best and comparably at $\lambda=0.25$.}
    \label{fig:ablation_F}
\end{figure}

The robust results (bottom) in Figure \ref{fig:ablation_acc} indicate different behaviour, particularly for the polynomial case. Here, the normalization and Tikhonov regularization remove numerical issues, leading to stable performance across all settings. Similar behaviour is observed for R-CMTF-BSD with $\lambda=0.25$. For $\lambda=1-e6$, however, the B-spline results collapse. Although this may appear concerning, it is explained by the fixed Tikhonov parameters $\mu_c = 1e-5$ and $\mu_{\mathbf{W}}=1e-4$, which for low $\lambda$ values are too strong and overpower the fit of the zeroth-order information matrix $\mathbf{F}$. This is reflected by the Error($\mathcal{J}$) and Error($\mathbf{F}$) results in Figures \ref{fig:ablation_Jac} and \ref{fig:ablation_F}: $\mathcal{J}$ is fitted well for both $\lambda$ values but $\mathbf{F}$ is not for $\lambda=1e-6$.

Finally, Figure \ref{fig:compression_ablation} reports the savings percentage (SP) across the different polynomial and B‑spline configurations of the internal functions. The SP is computed relative to the compressed FCNN rather than the full transformer model. The polynomial and B‑spline configurations were chosen to use the same number of parameters, as reflected by identical SP values in both heatmaps. The results show that variations in SP are primarily driven by changes in rank, while changes in degree or degrees of freedom lead only to minor differences. As expected, the impact of varying the degree or degrees of freedom becomes more pronounced at higher ranks.

In summary, the accuracy results in Figure \ref{fig:ablation_acc} highlight the need for the proposed robustness measures to mitigate numerical pathologies in both B-spline and polynomial decoupling. Moreover, to prevent fixed Tikhonov regularization from overpowering the fit of $\mathbf{F}$ and degrading approximation quality, $\lambda$ must be sufficiently large. This behaviour is reflected by the Error($\mathcal{J}$) and Error($\mathbf{F}$) results in Figures \ref{fig:ablation_Jac} and \ref{fig:ablation_F}. Additional results for an intermediate $\lambda=0.01$ value are provided in \ref{app:ablation_extra}.

\begin{figure}
    \centering
    \includegraphics[width=0.75\linewidth]{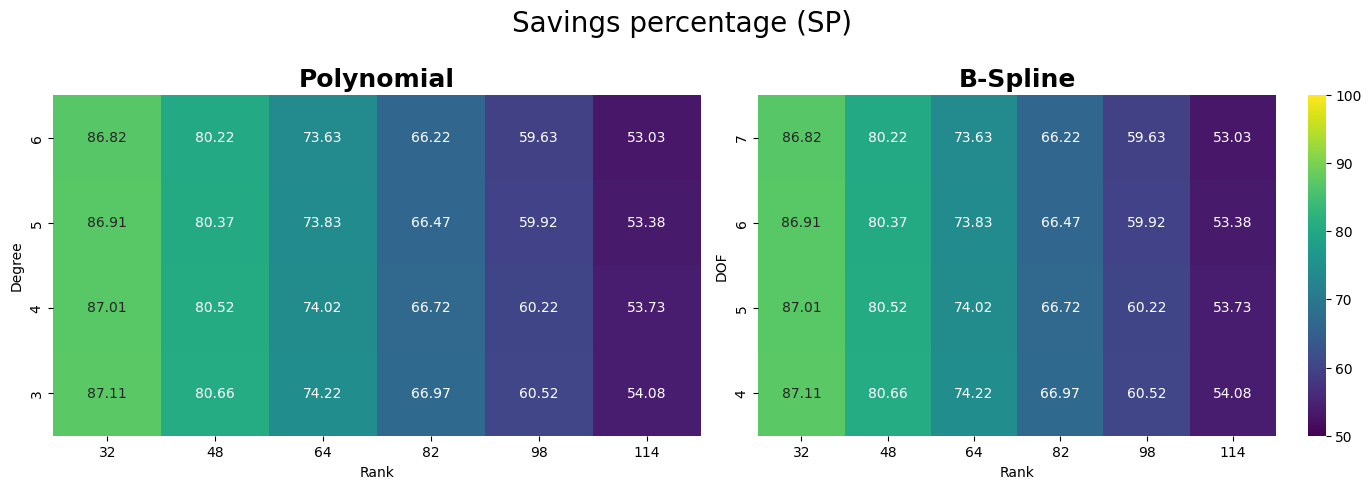}
    \caption{Saving percentage values for the compressed FCNN and used polynomial (left) and B-spline (right) decoupling configurations. As expected, variations in rank drive the largest changes in SP, whereas changes in degree or degrees of freedom have only a marginal effect.}
    \label{fig:compression_ablation}
\end{figure}

\subsection{Full compression procedure}

While the experiments in Subsection~\ref{subsection:single-transformer-block} focused on compressing a single transformer block, the experiments in this section compress a full transformer model using the FB and BF procedures outlined in Subsection~\ref{sec:procedure_compression}.

The experiments consist of two transformer types, a ViT \cite{dosovitskiy2020image} and Swin \cite{liu2021swin} model, trained on two different datasets, SVHN \cite{netzer2011reading} and CIFAR10 \cite{krizhevsky2009learning}. By applying the compression procedures to ViT and Swin, we assess compression behaviour and results across different transformer architectures. Specific data attributes, such as patch size, can be found in \ref{app:data_attributes}. Details on model architecture and training hyperparameters can be found in \ref{app:ViT_details} and \ref{app:Swin_details}

For the experiments, the R-CMTF-PD algorithm is used with polynomials of degree $d=3$, and the R-CMTF-BSD algorithm is used with degrees of freedom $\nu=4$ and piece-wise linear, quadratic, and cubic B-splines, i.e., $d=1,2,3$. For each model and dataset combination, the FB and BF procedures are executed $3$ times, using the same $S=128$ sampling points for each execution. The rank $r$ of the decoupling is chosen such that for the FCNN under consideration, the number of neurons (rank) is equal to the number of inputs and outputs, e.g., for the ViT model trained on SVHN, we have $r=64$.

\begin{figure}
    \centering
    \includegraphics[width=\linewidth]{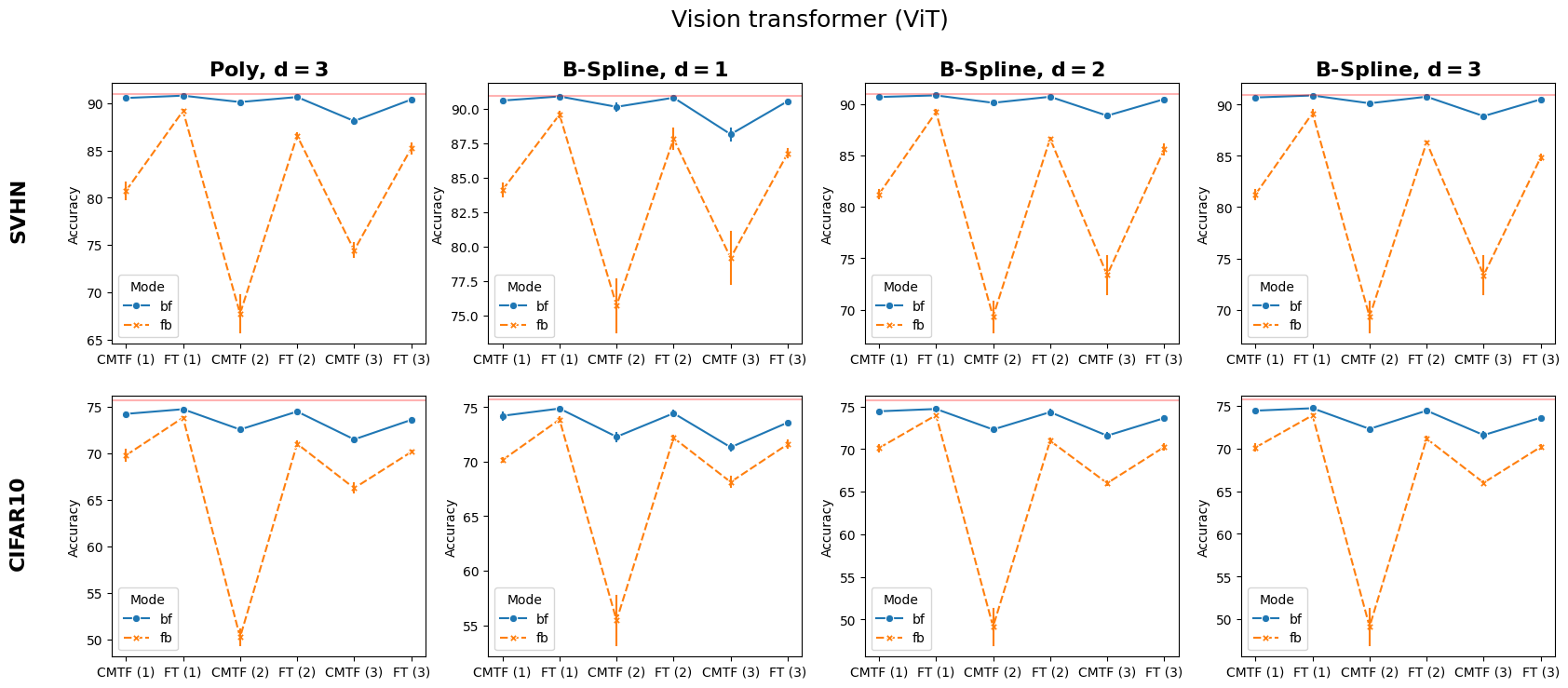}
    \caption{Mean top-$1$ accuracy results over $3$ runs of the BF and FB compression procedures, discussed in Subsection \ref{sec:procedure_compression}, applied to a ViT model using the R-CMTF-PD and R-CMTF-BSD algorithm. Each row corresponds to results on a different dataset, denoted by the row title. The first column corresponds to polynomial activations of degree $3$, the remaining columns correspond to the used B-spline activations of degree $1$, $2$, and $3$ with each spline having $\nu = 4$ degrees of freedom. For each plot, the y-axis denotes the accuracy, and the x-axis indicates the successive steps of the compression procedure: CMTF ($\alpha$) denotes a decoupling step and FT ($\alpha$) denotes a finetuning step, where $\alpha$ indicates the iteration of the compression procedure loop. The results show that for the ViT model, the BF compression procedure achieves more stable and accurate results compared to the FB procedure.}
    \label{fig:Compression_ViT}
\end{figure}

\begin{figure}
    \centering
    \includegraphics[width=\linewidth]{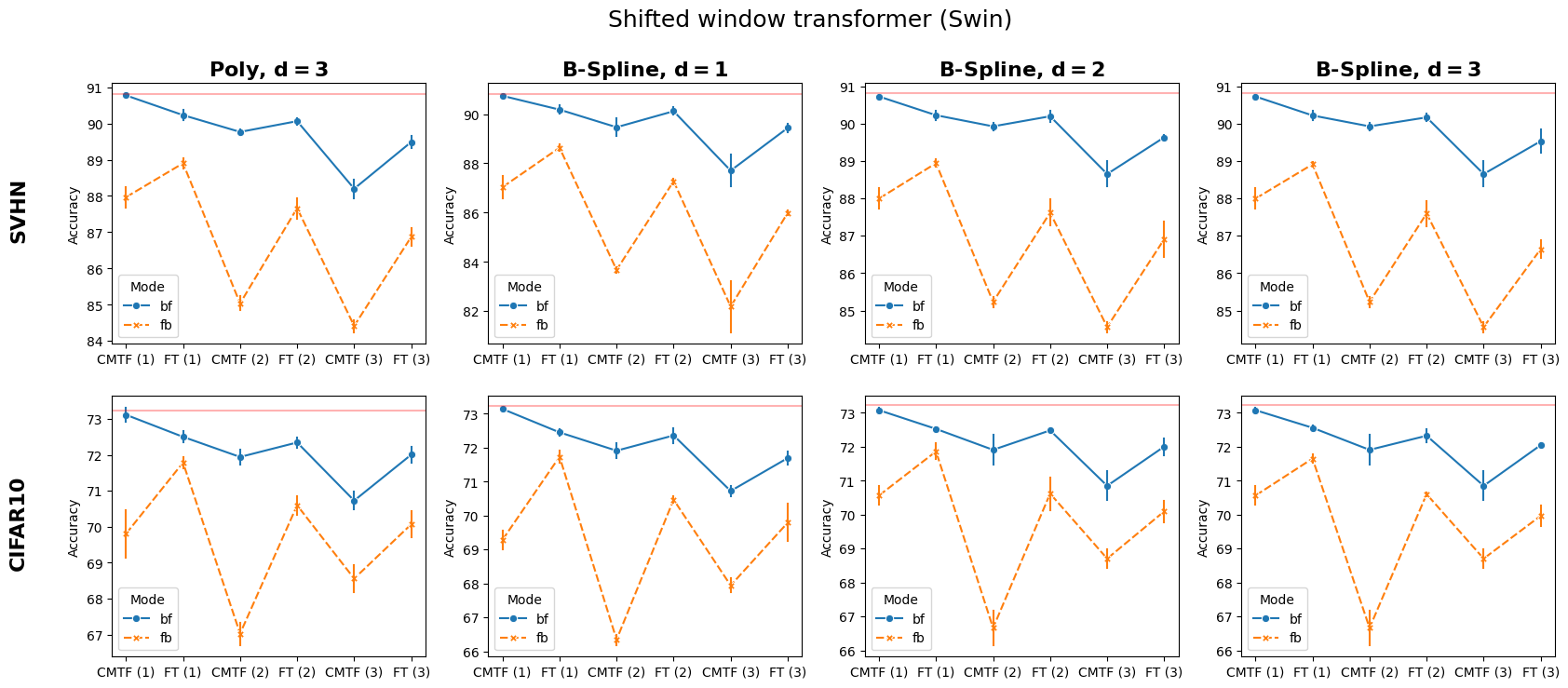}
    \caption{Mean top-$1$ accuracy results over $3$ runs of the BF and FB compression procedures, discussed in Subsection \ref{sec:procedure_compression}, applied to a Swin model using the R-CMTF-PD and R-CMTF-BSD algorithm. The figure layout and axis are as described in the caption of Figure \ref{fig:Compression_ViT}. The results show that for the Swin model, the BF compression procedure achieves more stable and accurate results compared to the FB procedure.}
    \label{fig:compression_Swin}
\end{figure}

Figure \ref{fig:Compression_ViT} shows the experiment results for ViT, Figure \ref{fig:compression_Swin} for Swin. The results in Figures \ref{fig:Compression_ViT} and \ref{fig:compression_Swin} show that, in general, the BF procedure has better accuracy and more stable results compared to the FB procedure. For the FB procedure, the accuracy on the test set after decoupling drops much more compared to the BF procedure, leading to saw-tooth looking compression cycles with the finetuning steps increasing the resulting accuracies. The degree of the spline does not seem to have much effect, as the results of both the BF and FB procedures are quite similar over the B-spline degrees, besides minor differences in standard deviation. The R-CMTF-PD algorithm results are comparable to the R-CMTF-BSD results.

For completeness, \ref{app:full_extra} contains tables with the exact values of the mean accuracies and standard deviations after each BF and FB procedure step, over the $3$ runs, for each model and dataset combination.

\subsection{Comparison with related works}

We compare the proposed B-spline decoupling approach with two related transformer compression methods, namely low-rank decomposition (SVD) \cite{hajimolahoseini2022strategies} and DRONE \cite{chen2021drone}. For the experiments, a data-efficient image transformer (DeiT) \cite{touvron2021training} is trained on the USPS dataset \cite{hull1994database} and used as the baseline model. The comparison results are summarized in Table \ref{tab:deit-usps}.
\begin{table}[htbp!]
\centering
\caption{Comparison of compression approaches on data-efficient image transformers (DeiT) \cite{touvron2021training} on the USPS dataset \cite{hull1994database}
}
\label{tab:deit-usps}
\begin{tabular}{|c|c|c|c|}
\hline
Method                                                      & Top-1 Accuracy    & Number of Parameters $\downarrow$               & Reduction $\uparrow$            \\
\hline
DeiT \cite{touvron2021training}                             & 94.77             & 668650              & 0 \%            \\
\hline
SVD \cite{hajimolahoseini2022strategies}                    & 94.68             & 537322              & 19.64 \%            \\
B-spline                                                    & 94.75             & 524650              & 21.54 \%    \\
\hline
DRONE \cite{chen2021drone}                                  & 94.39             & 318442              & 52.38 \%            \\
B-spline                                                    & 94.41             & 300010              & 55.13 \%    \\
\hline
\end{tabular}%
\end{table}

The SVD-based approach of Hajimolahoseini et al. \cite{hajimolahoseini2022strategies} compresses transformer models by decomposing the fully connected layers into two smaller matrices using singular value decomposition. In contrast to single-shot compression, their work proposes progressive and layer-wise decomposition strategies to better preserve accuracy during fine-tuning. DRONE \cite{chen2021drone} instead performs data-aware low-rank compression by exploiting the observation that intermediate representations of transformer layers often lie in low-dimensional subspaces. Unlike conventional low-rank approximations, DRONE incorporates information from the data distribution when computing the decomposition and can be applied to both fully connected and self-attention layers.

The results in Table \ref{tab:deit-usps} show that the proposed B-spline decoupling achieves competitive or improved accuracy-compression trade-offs compared to both related methods. For moderate compression rates, the B-spline approach achieves a parameter reduction of $21.54\%$ while maintaining a top-$1$ accuracy of $94.75\%$, which is slightly higher than the SVD-based approach despite using fewer parameters. At higher compression rates, the proposed method achieves a reduction of $55.13\%$ with a top-$1$ accuracy of $94.41\%$, again outperforming DRONE both in parameter reduction and final accuracy.

These results suggest that the proposed decoupling strategy provides an effective alternative to purely linear low-rank compression methods. While SVD and DRONE approximate transformer layers through linear subspace decompositions, the proposed method replaces the FCNN blocks with a nonlinear decoupled representation composed of learned univariate B-spline functions. This additional flexibility allows the compressed representation to better preserve the nonlinear behaviour of the original transformer blocks while still achieving substantial parameter reduction.

\section{Conclusions and future work}\label{sec:conclusion}

This work presented B‑spline decoupling as a unifying framework for several existing basis decoupling approaches. In particular, polynomial and piece‑wise linear decoupling were shown to be special cases of the B‑spline formulation. By appropriately configuring the B‑spline basis, the proposed model can reproduce these existing approaches or yield more expressive decoupled representations.

Building on this framework, we introduced R‑CMTF‑BSD, a robust B‑spline decoupling algorithm, and applied it to the compression of transformer models. A case study on a transformer trained on MNIST demonstrated that robust decoupling is essential in this context. The proposed normalization and Tikhonov regularization effectively eliminated numerical instabilities for both polynomial and B‑spline decoupling, leading to improved and more stable performance.

For full transformer compression, we investigated front‑to‑back (FB) and back‑to‑front (BF) strategies. Experiments on ViT and Swin Transformer models trained on SVHN and CIFAR‑10 consistently show the BF approach as superior, yielding higher post‑decoupling accuracy and a more stable accuracy decay across successive decouple–finetune cycles. These results indicate that BF compression combined with decoupling is both effective and promising for transformer architectures.

For future work, the decoupling compression procedure can be applied to a multitude of neural network architectures, e.g., fully connected parts of recurrent neural networks \cite{yu2019review}, variational autoencoders \cite{berahmand2024autoencoders}, MLP-mixers \cite{tolstikhin2021mlp}, or alternative transformer architectures such as BERT \cite{devlin2019bert}. In addition, the decoupling compression procedure can be combined with pruning techniques \cite{pham2025singular,pham2025enhanced} for further compression.

\section*{Acknowledgements}
This work was supported by the Fonds Wetenschappelijk Onderzoek (FWO) fundamental research fellowship 11A2H25N.
\bibliographystyle{elsarticle-harv}
\bibliography{ref}

\appendix
\section{Data and model (hyper)parameters}


\subsection{Attributes of datasets:}\label{app:data_attributes}
\begin{table}[h!]
    \centering
    \caption{Summary of data attributes for the MNIST, SVHN, CIFAR10, and USPS datasets.}
    \begin{tabular}{|c|c|c|c|c|}
        \hline
        \multirow{2}{*}{\textbf{ViT}} & \multicolumn{4}{c|}{Dataset} \\
         \hhline{~----}
        & MNIST & SVHN & CIFAR10 & USPS \\
        \hline\hline
         \textit{Attributes} & \multicolumn{4}{c|}{} \\
        \hline
         Classes & $10$ & $10$ & $10$ & $10$ \\
         \hline
         Image size & $28$ & $32$ & $32$ & $16$ \\
         \hline
         Channels & $1$ & $3$ & $3$ & $1$ \\
         \hline
         Patch size & $7$ & $8$ & $8$ & $4$ \\
         \hline
    \end{tabular}
    \label{tab:Data_details}
\end{table}

\subsection{ViT — MNIST, SVHN, CIFAR10, CIFAR100} \label{app:ViT_details}
\begin{table}[h!]
    \centering
    \caption{Summary of model architecture and training parameters for the ViT model used per dataset.}
    \begin{tabular}{|c|c|c|c|}
        \hline
        \multirow{2}{*}{\textbf{ViT}} & \multicolumn{3}{c|}{Dataset} \\
         \hhline{~---}
        & MNIST & SVHN & CIFAR10 \\
        \hline\hline
         \textit{Model architecture} & \multicolumn{3}{c|}{} \\
         \hline
         Embedding size & $64$ & $64$  & $64$ \\
         \hline
         Attention heads & $8$ & $8$ & $8$ \\
         \hline
         MLP dimension & $256$ & $256$ & $256$ \\
         \hline\hline
         Training & \multicolumn{3}{c|}{} \\
         \hline
         Optimizer & Adam & Adam & Adam \\
         \hline
         Learning rate & $1e-4$ & $1e-4$ & $1e-4$ \\
         \hline
         Batch size & $128$ & $128$ & $128$ \\
         \hline
         Epochs & $20$ & $100$ & $200$ \\
         \hline
         Data augmentation & None & None & None \\
         \hline\hline
         Best model metrics & \multicolumn{3}{c|}{} \\
         \hline
         Top-1 accuracy & $97.57\%$  & $90.94\%$ & $75.47\%$ \\
         \hline
    \end{tabular}
    \label{tab:ViT_details}
\end{table}

\subsection{Swin — SVHN, CIFAR10} \label{app:Swin_details}

\begin{table}[h!]
    \centering
    \caption{Summary of model architecture and training parameters for the Swin transformer model used per dataset.}
    \begin{tabular}{|c|c|c|}
        \hline
        \multirow{2}{*}{\textbf{Swin}} & \multicolumn{2}{c|}{Dataset} \\
         \hhline{~--}
        & SVHN & CIFAR10 \\
        \hline\hline
         \textit{Model architecture} & \multicolumn{2}{c|}{} \\
         \hline
         Embedding size & $64$  & $64$ \\
         \hline
         Attention heads & $4$ & $4$ \\
         \hline
         Window size & $4$ & $4$ \\
         \hline
         MLP dimension & $256$ & $256$ \\
         \hline\hline
         \textit{Training} & \multicolumn{2}{c|}{} \\
         \hline
         Optimizer & Adam & Adam \\
         \hline
         Learning rate & $1e-4$ & $1e-4$ \\
         \hline
         Batch size & $128$ & $128$ \\
         \hline
         Dropout rate & $0$ & $0$ \\
         \hline
         Att. dropout rate & $0$ & $0$ \\
         \hline
         Stochastic depth rate & $0.1$ & $0.1$ \\
         \hline
         Batch size & $128$ & $128$ \\
         \hline
         Epochs & $100$ & $100$ \\
         \hline
         Data augmentation & None & None \\
         \hline\hline
         Best model metrics & \multicolumn{2}{c|}{} \\
         \hline
         Top-1 accuracy & $90.81\%$ & $73.23\%$ \\
         \hline
    \end{tabular}
    \label{tab:Swin_details}
\end{table}

\newpage
\section{Additional experiment results}

\subsection{Ablation study}\label{app:ablation_extra}

\begin{figure}[h!]
    \centering
    \includegraphics[width=\linewidth]{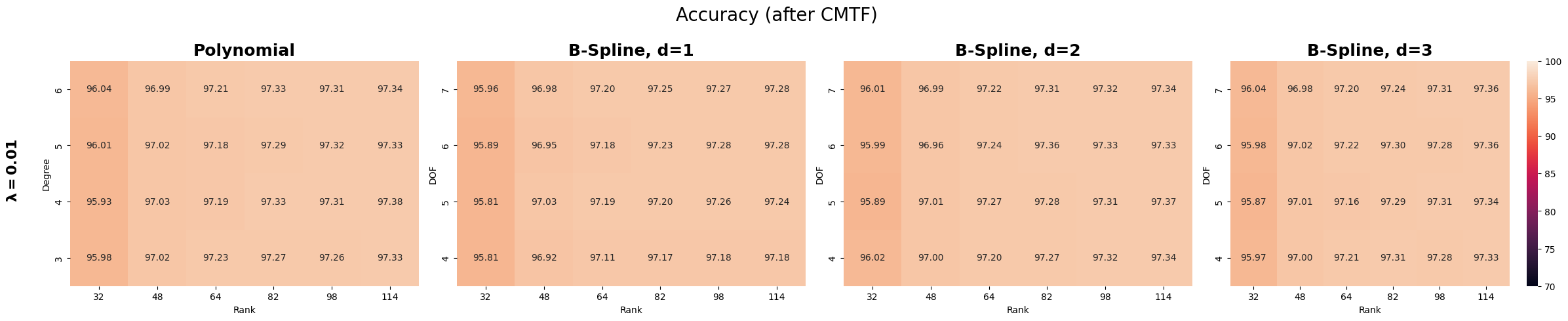}
    \caption{Mean top-$1$ accuracy results for the R-CMTF-BSD and R-CMTF-PD algorithms over $5$ runs for different hyperparameter configurations. The figure layout and axis are as mentioned in the caption of Figure \ref{fig:ablation_acc}, with only one row for $\lambda=0.01$.}
    \label{fig:ablation_Acc_App}
\end{figure}

\begin{figure}[h!]
    \centering
    \includegraphics[width=\linewidth]{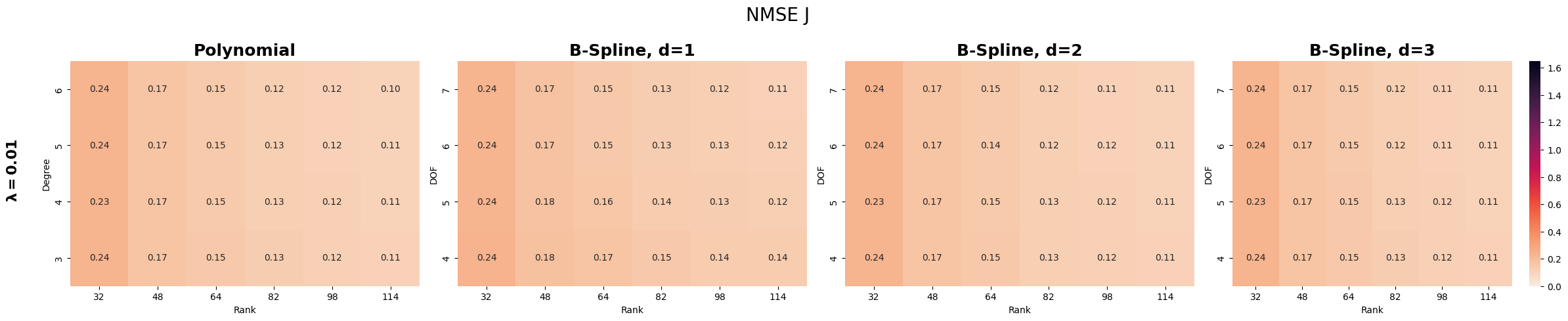}
    \caption{Mean Error($\mathcal{J}$) results for the R-CMTF-BSD and R-CMTF-PD algorithms over $5$ runs for different hyperparameter configurations. The figure layout and axis are as mentioned in the caption of Figure \ref{fig:ablation_acc}, with only one row for $\lambda=0.01$.}
    \label{fig:ablation_J_App}
\end{figure}

\begin{figure}[h!]
    \centering
    \includegraphics[width=\linewidth]{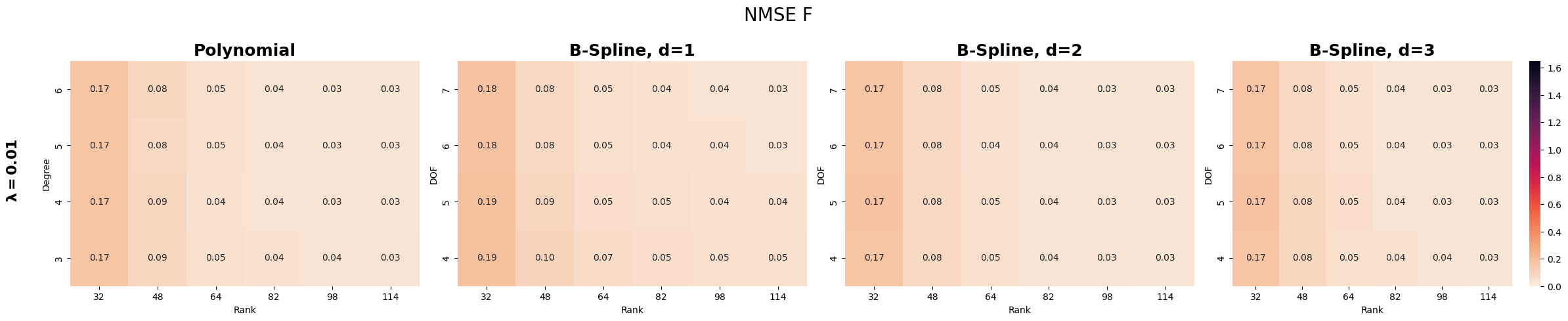}
    \caption{Mean Error($\mathbf{F}$) results for the R-CMTF-BSD and R-CMTF-PD algorithms over $5$ runs for different hyperparameter configurations. The figure layout and axis are as mentioned in the caption of Figure \ref{fig:ablation_acc}, with only one row for $\lambda=0.01$.}
    \label{fig:ablation_F_App}
\end{figure}

\newpage

\subsection{Full compression procedure}\label{app:full_extra}

\begin{table}[h!]
    \centering
    \caption{Mean top-$1$ accuracy results and standard deviation for each step of the front-to-back (FB) and back-to-front (BF) procedures, outlined in Subsection \ref{sec:procedure_compression}, applied to the ViT model trained on SVHN. For each row, the best accuracy result is in bold.}
    \begin{tabular}{|C{1.5cm}|C{1.5cm}|C{2.5cm}|C{2.5cm}|C{2.5cm}|C{2.5cm}|}
        \hline
        \multicolumn{6}{|c|}{\textbf{ViT + SVHN}} \\
        \hline\hline
         \multicolumn{2}{|c|}{\multirow{2}{*}{Front to back}} & Poly & \multicolumn{3}{c|}{B-spline, $\nu = 4$}  \\
         \hhline{~~----}
          \multicolumn{2}{|c|}{} & $d=3$ & $d=1$ & $d=2$ & $d=3$ \\
         \hline\hline
         \multicolumn{2}{|c|}{CMTF (1)} & $80.73 \pm 0.698$ & $\mathbf{84.09 \pm 0.388}$ & $81.24 \pm 0.351$ & $79.97 \pm 0.163$ \\
         \hhline{~-----}
         \multicolumn{2}{|c|}{FT (1)} & $89.19 \pm 0.071$ & $\mathbf{89.56 \pm 0.163}$ & $89.22 \pm 0.203$ & $89.16 \pm 0.234$ \\
         \hline\hline
         \multicolumn{2}{|c|}{CMTF (2)} & $67.73 \pm 1.577$ & $\mathbf{75.7 \pm 1.564}$ & $69.32 \pm 1.218$ & $67.6 \pm 1.295$ \\
         \hhline{~-----}
         \multicolumn{2}{|c|}{FT (2)} & $86.56 \pm 0.252$ & $\mathbf{87.8 \pm 0.592}$ & $86.63 \pm 0.073$ & $86.32 \pm 0.09$ \\
         \hline\hline
         \multicolumn{2}{|c|}{CMTF (3)} & $74.44 \pm 0.613$ & $\mathbf{79.16 \pm 1.567}$ & $73.36 \pm 1.542$ & $74.39 \pm 0.278$ \\
         \hhline{~-----}
         \multicolumn{2}{|c|}{FT (3)} & $85.22 \pm 0.441$ & $\mathbf{86.75 \pm 0.241}$ & $85.59 \pm 0.384$ & $84.84 \pm 0.228$ \\
        \hline\hline
         \multicolumn{2}{|c|}{\multirow{2}{*}{Back to front}} & \multicolumn{4}{c|}{\multirow{2}{*}{}}  \\
         \hhline{~~~~~}
          \multicolumn{2}{|c|}{} & \multicolumn{4}{c|}{}  \\
         \hline\hline
         \multicolumn{2}{|c|}{CMTF (1)} & $90.55 \pm 0.065$ & $90.57 \pm 0.131$ & $\mathbf{90.69 \pm 0.089}$ & $90.58 \pm 0.028$  \\
         \hhline{~-----}
         \multicolumn{2}{|c|}{FT (1)} & $90.8 \pm 0.06$ & $90.87 \pm 0.087$ & $90.85 \pm 0.033$ & $\mathbf{90.89 \pm 0.041}$ \\
         \hline\hline
         \multicolumn{2}{|c|}{CMTF (2)} & $90.12 \pm 0.075$ & $90.12 \pm 0.219$ & $90.12 \pm 0.051$ & $\mathbf{90.22 \pm 0.045}$ \\
         \hhline{~-----}
         \multicolumn{2}{|c|}{FT (2)} & $90.65 \pm 0.125$ & $\mathbf{90.78 \pm 0.07}$ & $90.71 \pm 0.03$ & $90.77 \pm 0.076$ \\
         \hline\hline
         \multicolumn{2}{|c|}{CMTF (3)} & $88.11 \pm 0.221$ & $88.13 \pm 0.356$ & $\mathbf{88.87 \pm 0.13}$ & $88.75 \pm 0.347$ \\
         \hhline{~-----}
         \multicolumn{2}{|c|}{FT (3)} & $90.38 \pm 0.118$ & $\mathbf{90.53 \pm 0.074}$ & $90.48 \pm 0.038$ & $90.51 \pm 0.012$ \\
         \hline
    \end{tabular}
    \label{tab:placeholder1}
\end{table}

\begin{table}[h!]
    \centering
    \caption{Mean top-$1$ accuracy results and standard deviation for each step of the front-to-back (FB) and back-to-front (BF) procedures, outlined in Subsection \ref{sec:procedure_compression}, applied to the ViT model trained on CIFAR-10. For each row, the best accuracy result is in bold.}
    \begin{tabular}{|C{1.5cm}|C{1.5cm}|C{2.5cm}|C{2.5cm}|C{2.5cm}|C{2.5cm}|}
        \hline
        \multicolumn{6}{|c|}{\textbf{ViT + CIFAR-10}} \\
        \hline\hline
         \multicolumn{2}{|c|}{\multirow{2}{*}{Front to back}} & Poly & \multicolumn{3}{c|}{B-spline, $\nu = 4$}  \\
         \hhline{~~----}
          \multicolumn{2}{|c|}{} & $d=3$ & $d=1$ & $d=2$ & $d=3$ \\
         \hline\hline
         \multicolumn{2}{|c|}{CMTF (1)} & $69.8 \pm 0.508$ & $\mathbf{70.14 \pm 0.141}$ & $70.12 \pm 0.307$ & $69.94 \pm 0.287$ \\
         \hhline{~-----}
         \multicolumn{2}{|c|}{FT (1)} & $73.85 \pm 0.084$ & $73.86 \pm 0.199$ & $\mathbf{73.97 \pm 0.028}$ & $73.88 \pm 0.111$ \\
         \hline\hline
         \multicolumn{2}{|c|}{CMTF (2)} & $50.24 \pm 0.664$ & $\mathbf{55.44 \pm 1.823}$ & $49.15 \pm 1.735$ & $50.56 \pm 0.778$ \\
         \hhline{~-----}
         \multicolumn{2}{|c|}{FT (2)} & $71.03 \pm 0.216$ & $\mathbf{72.18 \pm 0.17}$ & $70.97 \pm 0.257$ & $71.15 \pm 0.172$ \\
         \hline\hline
         \multicolumn{2}{|c|}{CMTF (3)} & $66.27 \pm 0.404$ & $\mathbf{68.13 \pm 0.389}$ & $65.99 \pm 0.155$ & $66.01 \pm 0.555$ \\
         \hhline{~-----}
         \multicolumn{2}{|c|}{FT (3)} & $70.16 \pm 0.082$ & $\mathbf{71.6 \pm 0.253}$ & $70.24 \pm 0.243$ & $70.17 \pm 0.165$ \\
        \hline\hline
         \multicolumn{2}{|c|}{\multirow{2}{*}{Back to front}} & \multicolumn{4}{c|}{\multirow{2}{*}{}}  \\
         \hhline{~~~~~}
          \multicolumn{2}{|c|}{} & \multicolumn{4}{c|}{}  \\
         \hline\hline
         \multicolumn{2}{|c|}{CMTF (1)} & $74.25 \pm 0.151$ & $74.2 \pm 0.275$ & $\mathbf{74.43 \pm 0.052}$ & $74.19 \pm 0.033$  \\
         \hhline{~-----}
         \multicolumn{2}{|c|}{FT (1)} & $74.76 \pm 0.125$ & $\mathbf{74.87 \pm 0.057}$ & $74.7 \pm 0.114$ & $74.71 \pm 0.043$ \\
         \hline\hline
         \multicolumn{2}{|c|}{CMTF (2)} & $\mathbf{72.57 \pm 0.176}$ & $72.28 \pm 0.314$ & $72.29 \pm 0.128$ & $71.96 \pm 0.495$ \\
         \hhline{~-----}
         \multicolumn{2}{|c|}{FT (2)} & $\mathbf{74.51 \pm 0.16}$ & $74.44 \pm 0.176$ & $74.33 \pm 0.249$ & $74.43 \pm 0.201$ \\
         \hline\hline
         \multicolumn{2}{|c|}{CMTF (3)} & $71.51 \pm 0.173$ & $71.32 \pm 0.228$ & $71.57 \pm 0.293$ & $\mathbf{71.78 \pm 0.063}$ \\
         \hhline{~-----}
         \multicolumn{2}{|c|}{FT (3)} & $73.61 \pm 0.205$ & $73.59 \pm 0.078$ & $\mathbf{73.63 \pm 0.148}$ & $73.6 \pm 0.091$  \\
         \hline
    \end{tabular}
    \label{tab:placeholder2}
\end{table}

\begin{table}[h!]
    \centering
    \caption{Mean top-$1$ accuracy results and standard deviation for each step of the front-to-back (FB) and back-to-front (BF) procedures, outlined in Subsection \ref{sec:procedure_compression}, applied to the Swin model trained on SVHN. For each row, the best accuracy result is in bold.}
    \begin{tabular}{|C{1.5cm}|C{1.5cm}|C{2.5cm}|C{2.5cm}|C{2.5cm}|C{2.5cm}|}
        \hline
        \multicolumn{6}{|c|}{\textbf{Swin + SVHN}} \\
        \hline\hline
         \multicolumn{2}{|c|}{\multirow{2}{*}{Front to back}} & Poly & \multicolumn{3}{c|}{B-spline, $\nu = 4$}  \\
         \hhline{~~----}
          \multicolumn{2}{|c|}{} & $d=3$ & $d=1$ & $d=2$ & $d=3$ \\
         \hline\hline
         \multicolumn{2}{|c|}{CMTF (1)} & $87.97 \pm 0.226$ & $87.04 \pm 0.36$ & $87.99 \pm 0.226$ & $\mathbf{88.0 \pm 0.18}$ \\
         \hhline{~-----}
         \multicolumn{2}{|c|}{FT (1)} & $88.9 \pm 0.122$ & $88.64 \pm 0.109$ & $\mathbf{88.94 \pm 0.082}$ & $88.91 \pm 0.05$ \\
         \hline\hline
         \multicolumn{2}{|c|}{CMTF (2)} & $85.04 \pm 0.159$ & $83.68 \pm 0.102$ & $\mathbf{85.23 \pm 0.107}$ & $84.77 \pm 0.399$ \\
         \hhline{~-----}
         \multicolumn{2}{|c|}{FT (2)} & $\mathbf{87.66 \pm 0.229}$ & $87.28 \pm 0.093$ & $87.62 \pm 0.284$ & $87.59 \pm 0.278$ \\
         \hline\hline
         \multicolumn{2}{|c|}{CMTF (3)} & $84.42 \pm 0.135$ & $82.19 \pm 0.842$ & $\mathbf{84.56 \pm 0.109}$ & $84.05 \pm 0.38$  \\
         \hhline{~-----}
         \multicolumn{2}{|c|}{FT (3)} & $86.87 \pm 0.195$ & $85.99 \pm 0.077$ & $\mathbf{86.91 \pm 0.386}$ & $86.64 \pm 0.193$ \\
        \hline\hline
         \multicolumn{2}{|c|}{\multirow{2}{*}{Back to front}} & \multicolumn{4}{c|}{\multirow{2}{*}{}}  \\
         \hhline{~~~~~}
          \multicolumn{2}{|c|}{} & \multicolumn{4}{c|}{}  \\
         \hline\hline
         \multicolumn{2}{|c|}{CMTF (1)} & $90.77 \pm 0.009$ & $90.74 \pm 0.028$ & $90.72 \pm 0.033$ & $\mathbf{90.8 \pm 0.06}$ \\
         \hhline{~-----}
         \multicolumn{2}{|c|}{FT (1)} & $\mathbf{90.23 \pm 0.117}$ & $90.18 \pm 0.135$ & $90.22 \pm 0.103$ & $90.22 \pm 0.107$ \\
         \hline\hline
         \multicolumn{2}{|c|}{CMTF (2)} & $89.77 \pm 0.067$ & $89.47 \pm 0.288$ & $\mathbf{89.92 \pm 0.085}$ & $89.72 \pm 0.066$ \\
         \hhline{~-----}
         \multicolumn{2}{|c|}{FT (2)} & $90.06 \pm 0.074$ & $90.12 \pm 0.123$ & $\mathbf{90.19 \pm 0.119}$ & $90.16 \pm 0.083$ \\
         \hline\hline
         \multicolumn{2}{|c|}{CMTF (3)} & $88.2 \pm 0.213$ & $87.71 \pm 0.521$ & $\mathbf{88.65 \pm 0.275}$ & $88.39 \pm 0.428$ \\
         \hhline{~-----}
         \multicolumn{2}{|c|}{FT (3)} & $89.49 \pm 0.142$ & $89.44 \pm 0.151$ & $\mathbf{89.63 \pm 0.05}$ & $89.53 \pm 0.25$  \\
         \hline
    \end{tabular}
    \label{tab:placeholder3}
\end{table}

\begin{table}[h!]
    \centering
    \caption{Mean top-$1$ accuracy results and standard deviation for each step of the front-to-back (FB) and back-to-front (BF) procedures, outlined in Subsection \ref{sec:procedure_compression}, applied to the Swin model trained on CIFAR-10. For each row, the best accuracy result is in bold.}
    \begin{tabular}{|C{1.5cm}|C{1.5cm}|C{2.5cm}|C{2.5cm}|C{2.5cm}|C{2.5cm}|}
        \hline
        \multicolumn{6}{|c|}{\textbf{Swin + CIFAR-10}} \\
        \hline\hline
         \multicolumn{2}{|c|}{\multirow{2}{*}{Front to back}} & Poly & \multicolumn{3}{c|}{B-spline, $\nu = 4$}  \\
         \hhline{~~----}
          \multicolumn{2}{|c|}{} & $d=3$ & $d=1$ & $d=2$ & $d=3$ \\
         \hline\hline
         \multicolumn{2}{|c|}{CMTF (1)} & $69.8 \pm 0.545$ & $69.28 \pm 0.221$ & $\mathbf{70.56 \pm 0.227}$ & $70.39 \pm 0.387$ \\
         \hhline{~-----}
         \multicolumn{2}{|c|}{FT (1)} & $71.78 \pm 0.119$ & $71.73 \pm 0.151$ & $\mathbf{71.86 \pm 0.192}$ & $71.65 \pm 0.101$ \\
         \hline\hline
         \multicolumn{2}{|c|}{CMTF (2)} & $67.03 \pm 0.253$ & $66.34 \pm 0.124$ & $66.66 \pm 0.405$ & $\mathbf{67.19 \pm 0.364}$ \\
         \hhline{~-----}
         \multicolumn{2}{|c|}{FT (2)} & $70.6 \pm 0.212$ & $70.47 \pm 0.082$ & $\mathbf{70.62 \pm 0.394}$ & $70.6 \pm 0.046$ \\
         \hline\hline
         \multicolumn{2}{|c|}{CMTF (3)} & $68.56 \pm 0.298$ & $67.94 \pm 0.171$ & $68.7 \pm 0.215$ & $\mathbf{68.81 \pm 0.184}$ \\
         \hhline{~-----}
         \multicolumn{2}{|c|}{FT (3)} & $70.07 \pm 0.3$ & $69.79 \pm 0.446$ & $\mathbf{70.1 \pm 0.255}$ & $69.97 \pm 0.24$ \\
        \hline\hline
         \multicolumn{2}{|c|}{\multirow{2}{*}{Back to front}} & \multicolumn{4}{c|}{\multirow{2}{*}{}}  \\
         \hhline{~~~~~}
          \multicolumn{2}{|c|}{} & \multicolumn{4}{c|}{}  \\
         \hline\hline
         \multicolumn{2}{|c|}{CMTF (1)} & $73.11 \pm 0.159$ & $73.14 \pm 0.028$ & $73.08 \pm 0.057$ & $\mathbf{73.21 \pm 0.131}$ \\
         \hhline{~-----}
         \multicolumn{2}{|c|}{FT (1)} & $72.5 \pm 0.123$ & $72.45 \pm 0.073$ & $72.53 \pm 0.045$ & $\mathbf{72.56 \pm 0.064}$ \\
         \hline\hline
         \multicolumn{2}{|c|}{CMTF (2)} & $71.94 \pm 0.17$ & $71.91 \pm 0.177$ & $71.91 \pm 0.353$ & $\mathbf{72.12 \pm 0.367}$ \\
         \hhline{~-----}
         \multicolumn{2}{|c|}{FT (2)} & $72.34 \pm 0.116$ & $72.36 \pm 0.176$ & $\mathbf{72.48 \pm 0.045}$ & $72.33 \pm 0.148$ \\
         \hline\hline
         \multicolumn{2}{|c|}{CMTF (3)} & $70.73 \pm 0.201$ & $70.72 \pm 0.131$ & $70.85 \pm 0.347$ & $\mathbf{71.13 \pm 0.266}$ \\
         \hhline{~-----}
         \multicolumn{2}{|c|}{FT (3)} & $72.0 \pm 0.174$ & $71.69 \pm 0.165$ & $72.01 \pm 0.202$ & $\mathbf{72.05 \pm 0.036}$ \\
         \hline
    \end{tabular}
    \label{tab:placeholder4}
\end{table}

\end{document}